\documentclass{bmvc2k}

\title{Physics-Informed Modeling for Wood Thermal Analysis and Prediction}

\addauthor{Jingren Xie}{jinxi@dtu.dk}{1, 2}
\addauthor{Alex John Buckthal}{albu@kglakademi.dk}{3}
\addauthor{Ryan Anthony O'Connor}{roc@kglakademi.dk}{3}
\addauthor{Isak Worre Foged}{hfog@kglakademi.dk}{3}
\addauthor{Dim P. Papadopoulos}{dimp@dtu.dk}{1,2}

\addinstitution{
 Technical University of Denmark\\
 Denmark
}
\addinstitution{
 Pioneer Centre for AI\\
 Denmark
}
\addinstitution{
 The Royal Danish Academy\\
 Denmark
}
\runninghead{Xie et al.}{Physics-Informed Modeling}

\usepackage{booktabs}
\usepackage{multirow}
\usepackage{amsfonts}
\usepackage{algorithm}
\usepackage{algpseudocode}
\usepackage{amsmath}
\usepackage{amsthm}
\newtheorem{theorem}{Theorem}[section]

\begin{document}

\maketitle

\begin{abstract}
Wood materials exhibit complex, spatially varying thermal properties that challenge traditional architectural assumptions of material homogeneity. Although data-driven approaches can directly map wood RGB images to their corresponding thermal responses, they operate as uninterpretable black boxes that prioritize statistical correlation and may absorb experimental noise rather than thermodynamic plausibility. To address these limitations, we present physics-informed deep learning frameworks that integrate partial differential equations (PDEs) to predict pixel-level thermal responses of spatially heterogeneous wood materials using wood RGB images and testbed temperature maps. Specifically, we investigate two distinct approaches to enforcing a normalized 2D steady-state heat transfer equation derived from the general heat transfer equation: Physics-Informed Convolutional Neural Networks (PICNNs), which embed physics as a soft penalty term in the loss function, and Physics-Integrated Convolutional Neural Networks (PInteCNNs), which hard-code an analytical approximator-predictor-corrector solver directly into convolutional neural networks. To validate our proposed approaches, we collect three real-world multimodal datasets of Poplar, Grandis Cross-Cut (Grandis-CC), and Grandis Radial-Cut (Grandis-RC) wood samples. We further demonstrate that embedding physical inductive biases successfully balances predictive accuracy,  physical interpretability, and intra-species diversity, outperforming data-driven approaches in handling complex wood material heterogeneity and enabling the extraction of interpretable physical parameters. 
\textit{Project}: {\color{cyan}https://zekifayes.github.io/pim}.
\end{abstract}

\section{Introduction}
\label{sec:intro}
For the past century, wood architectural design has been driven by an ideal of material simplification, prioritizing homogeneity, repeatability, and isothermal conditions. 
This has resulted in a severe misalignment 
with wood materials.
In reality, wood exhibits complex thermal properties, shaped by spatially varying density, leading to diverse thermal responses under given environmental conditions.
A comprehensive understanding of wood thermal responses plays an essential role in quality control and adaptive assembly strategies~\cite{menges2015performative, tibbits2016self, cheng2020multifunctional, fragkia2020methods, fragkia2020wood}. 
Despite their potential, these strategies remain largely conceptual and impractical due to limited foundational knowledge and oversimplified, homogeneous assumptions about wood properties.
While current research has attempted to bridge this gap by exploring propositional design processes on 2D surfaces with mono-modal properties~\cite{fragkia2020methods, fragkia2021predictive, fragkia2023thermodynamic}, a robust predictive model capable of capturing complex wood thermal responses remains lacking.
To overcome these limitations, this study focuses on analyzing, understanding, and predicting the complex thermal responses of wood samples, therefore establishing a foundational predictive framework necessary to harness wood's natural thermal variability for responsive wood design.

To study these thermal behaviors, vision-based methods offer highly effective noncontact and nondestructive approaches for wood material analysis.
Our observation is that wood RGB images and their corresponding thermal responses exhibit highly discernible morphological similarities (Appendix~\ref{sec:wood_analysis_app}), especially when wood has rich grain patterns, suggesting the potential for cross-modal joint feature analysis.
Because wood's physical structure governs how heat flows, wood RGB images can serve as a highly effective visual proxy for spatially varying thermal properties.
We can extract multi-scale visual features to predict complex, pixel-level thermal responses by leveraging data-driven approaches.
While data-driven approaches possess representational capabilities to perform this mapping, they are uninterpretable black boxes that may absorb experimental noise, prioritizing statistical correlation over thermodynamic plausibility.

Our case, however, is intrinsically more challenging due to the inherent measurement noise, the complexity of multimodal data, and the extreme spatial variations in wood materials.
Therefore, our predictive models need to balance model capacity, intra-species diversity, and physical regularization. 
The intrinsic intra-species diversity (characterized by spatially varying grain orientations, localized density, and unmodeled 3D subsurface structures) requires not only the high representational capacity of deep neural networks but also physical interpretability to predict wood thermal responses accurately. 
Yet, if we leave this representational capacity entirely unconstrained, that will cause the models to conflate genuine steady-state heat conduction with static measurement noise and unobservable 3D anomalies. 

\begin{figure}[t]
    \centering
    \includegraphics[width=\textwidth]{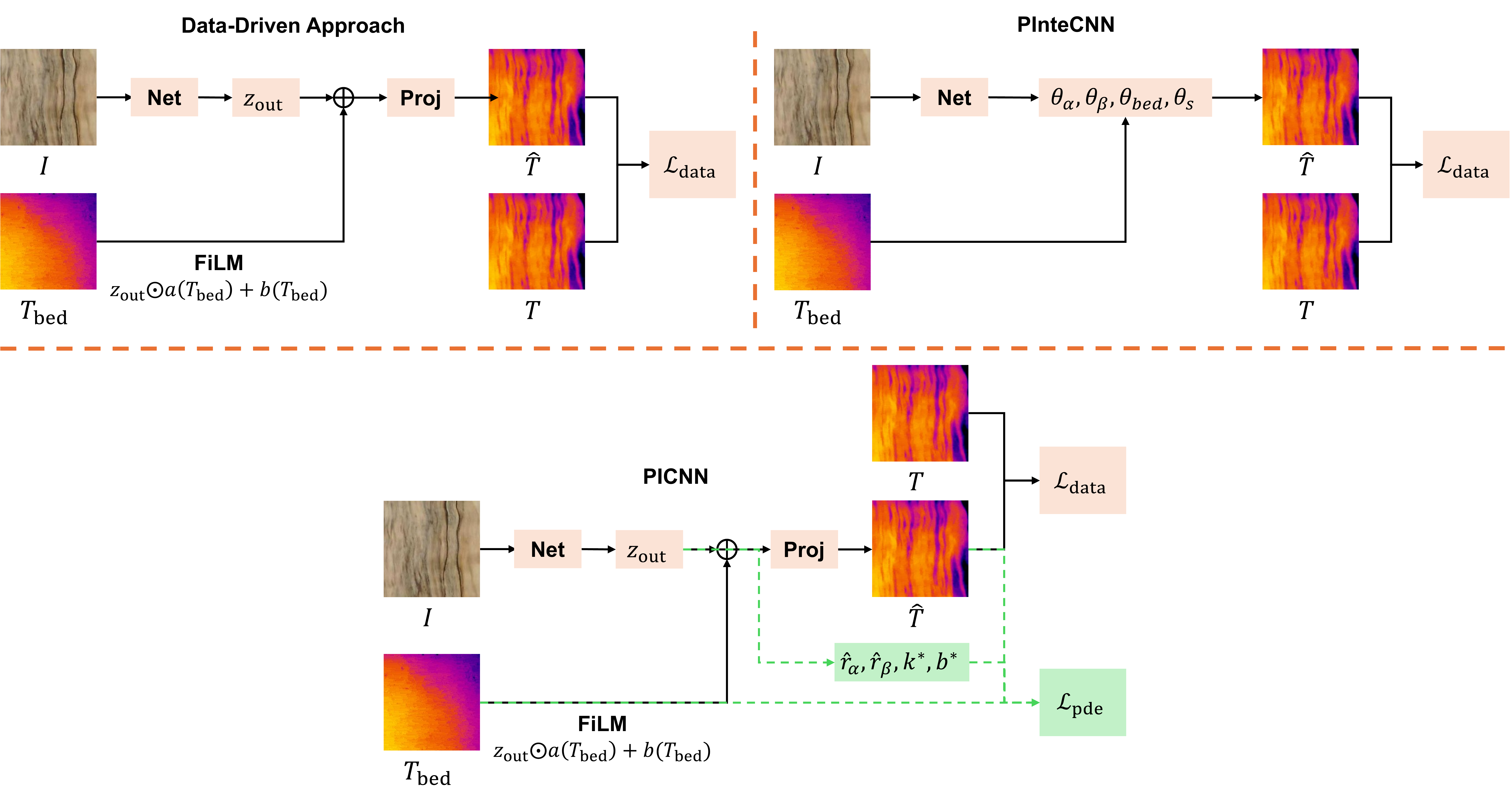} 
    \caption{
    Overview of three thermal prediction frameworks.
    Top Left: The data-driven approach processes the wood RGB image $I$, conditioned on the testbed temperature $T_{\text{bed}}$ via Feature-wise Linear Modulation (FiLM)~\cite{perez2018film}, to directly predict the thermal response $\hat{T}$ supervised solely by a data reconstruction loss function ($\mathcal{L}_{\text{data}}$). 
    Bottom: The PICNN builds upon the data-driven approach by incorporating soft physical constraints. 
    It supplements $\mathcal{L}_{\text{data}}$ with a physics-informed penalty loss function ($\mathcal{L}_{\text{pde}}$), governed by learned physical parameters ($\hat{r}_{\alpha}$, $\hat{r}_{\beta}$, $k^{*}$, $b^{*}$). 
    Top Right: The PInteCNN enforces hard physical constraints by explicitly embedding a numerical solver. 
    Instead of directly outputting $\hat{T}$, the network predicts spatially varying physical parameters ($\theta_{\alpha}$, $\theta_{\beta}$, $\theta_{\text{bed}}$, $\theta_{s}$), which the integrated solver subsequently processes to generate the final thermal response.
    }
    \label{fig:framework_overview}
\end{figure}

To address these limitations, this paper integrates deep learning and partial differential equations (PDEs) to predict wood thermal responses using wood RGB images and testbed temperature maps. 
Specifically, we present two distinct approaches to enforce a normalized 2D steady-state heat equation derived from the general heat transfer equation as a physical constraint in convolutional neural networks (Figure~\ref{fig:framework_overview}): 1) Physics-Informed Convolutional Neural Networks (PICNNs), which embed the thermodynamic law as a soft penalty term in the loss function, and 2) Physics-Integrated Convolutional Neural Networks(PInteCNNs), which hard-code the thermodynamic law directly into convolutional neural networks. 
Both approaches establish a thermodynamic bottleneck that acts as a robust regularizer. 
Enforcing these physical constraints by rejecting non-physical noise guarantees physically plausible generalization across intra-wood samples, which transforms convolutional neural networks into an interpretable framework capable of explicitly extracting underlying wood material properties, such as thermal anisotropy.
The main contributions of this paper are summarized as follows:
\begin{itemize}
    \item \textbf{Multimodal Data Acquisition and Cross-Modal Analysis.} 
We utilize an experimental setup to collect multiple wood datasets (Poplar, Grandis Cross-Cut (Grandis-CC), and Grandis Radial-Cut (Grandis-RC), as shown in Figure~\ref{fig:samples}). 
We identify an empirical inverse correlation indicating that visually darker regions of wood samples correspond to higher thermal conductance through cross-modal analysis. 
Additionally, deep feature space analysis using vision transformer-based models
demonstrates that intermediate network layers optimally capture the shared structural morphologies governing both wood visual appearances and thermal responses.

\item \textbf{Thermodynamic Model Formulation and Physics-Informed Architecture Design.} 
We derive a normalized 2D steady-state governing equation from fundamental 3D heat conduction principles, systematically incorporating specific boundary conditions (Dirichlet, Robin, and Neumann) and infrared camera radiance formulations to establish a rigorous mathematical mapping between testbed temperature maps and observed thermal responses.
We formulate two predictive models based on the normalized 2D steady-state governing equation. 
The PICNN incorporates physics through a soft PDE loss function, while the PInteCNN enforces physics via an explicitly embedded, differentiable numerical approximator-predictor-corrector solver.

\item \textbf{Physics-Stratified Data Partitioning and Quantitative Evaluation.} 
We introduce a rigorous bivariate sorting algorithm (Algorithm~\ref{alg:bivariate_split}) using Kolmogorov-Smirnov (KS) statistics to distribute structurally heterogeneous samples, thus mitigating domain shifts and ensuring statistical consistency across training, validation, and test datasets.
We demonstrate that PDE-based regularization improves predictive performance over data-driven approaches on our datasets.
Furthermore, the study characterizes the specific mathematical trade-offs among predictive accuracy, model interpretability, and intra-species diversity when applying hard or soft physical constraints.
\end{itemize}

\begin{figure}[t]
    \centering
    \includegraphics[width=1\textwidth]{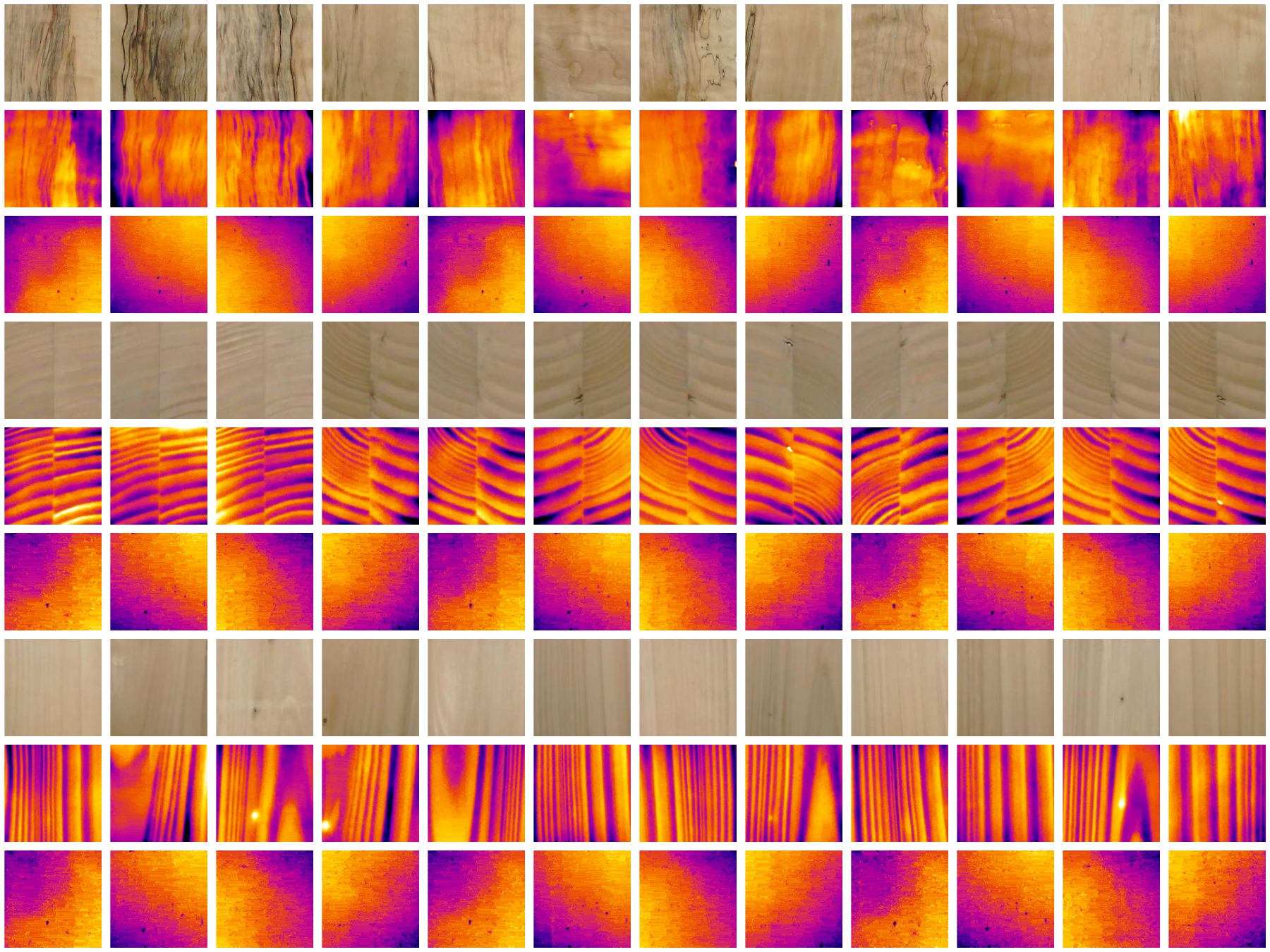}
    \caption{
    Selected data samples in the Poplar (top three rows), Grandis-CC (middle three rows), and Grandis-RC (bottom three rows) datasets. 
    For each category, the three rows display the wood RGB images, the corresponding thermal responses (normalized range $[-3, 3]$), and the testbed temperature maps (normalized range $[-3, 3]$).
    }
    \label{fig:samples}
\end{figure}

\section{Related Work}
\paragraph{Material Classification and Wood Identification from Thermal Imaging.}
Currently, the rapid development of deep learning has enabled image-level material classification from RGB or thermal images~\cite{johnson2011microgeometry, saponaro2015material, kampouris2016fine, cho2018deep, tanaka2018material, grossmann2022improving, dashpute2023thermal, herrera2024deep}.
For instance, thermal imagery has been effectively employed to classify materials at the image level~\cite{saponaro2015material}, with broader applications in wood identification extensively detailed in recent reviews~\cite{hwang2021computer, silva2022computer, wang2022applications}.
While these advanced deep learning models excel at material classification tasks, they are insufficient for analyzing highly anisotropic wood, which requires \textit{fine-grained pixel-to-pixel} analysis.
Some progress has been made in predicting wood properties.
For instance,~\cite{dashpute2023thermal} captured a stack of thermal images as the laser was switched on and off and then estimated the thermal parameters of a material, such as diffusivity and absorption parameters, using finite difference methods.
Despite these methodological advances, pixel-level analysis of thermal images of anisotropic wood remains largely unexplored.

\paragraph{Physics-Informed Convolutional Neural Networks.}
Incorporating physics into deep learning in scientific domains has led to the development of physics-informed or guided machine learning~\cite{raissi2019physics,karniadakis2021physics, yu2024learning, wang2025physics}, providing highly interpretable frameworks for complex physical systems.
One of the most elegant frameworks is Physics-Informed Neural Networks (PINNs)~\cite{raissi2017physics, raissi2019physics, jagtap2020conservative, cai2021physicsfluid, cai2021physics, chen2021physics}.
For instance, \cite{cai2021physics} introduced a PINN to solve heat transfer problems.
\cite{shaeri2025multimodal} integrated shortwave and longwave radiation modeling with deep learning to predict mean radiant temperature from multimodal environmental data.
Despite the potential of PINNs, incorporating spatial testbed temperature maps into standard PINN architectures remains difficult. 
Because PINNs typically depend on multi-layer perceptrons, they are inefficient at processing the spatially varying multimodal inputs required for our realistic cases.

Physics-Informed Convolutional Neural Networks (PICNNs) have emerged as a powerful extension of PINNs, achieving notable success in materials science~\cite{mianroodi2021teaching}, thermodynamics \cite{zhao2023physics, zhou2025automated}, fluid dynamics \cite{wandel2022spline, zhang2023physics, mohan2023embedding, liu2024multi, zhang2025mrf, zhou2025automated, pavlik2025fully}, and other fields \cite{zhu2019physics, gao2021phygeonet, rao2023encoding, fuhg2023deep}. 
For example, \cite{zhao2023physics} presented a PICNN for the steady-state temperature field prediction of heat source layouts, enabling physics-informed training without labeled data by incorporating heat conduction equations and boundary constraints directly into the loss function.
These works focus more on solving ideal PDEs without labeled data.
However, our goal is to predict wood thermal responses using noisy real-world data and PDEs.
To address this, we derive a 2D normalized governing equation and present two physics-informed frameworks for learning from noisy real-world observations while remaining strictly bounded by the 2D normalized governing equation.

\section{Thermal Data Collection}
\label{sec:thermal_data}

\noindent \textbf{Experimental Setup.}
Pixel-level thermal datasets for wood are scarce, thus presenting a significant bottleneck for analyzing, understanding, and predicting its thermal responses. 
To overcome this limitation, an experimental setup was designed to acquire paired wood RGB images and their corresponding thermal responses (Figure~\ref{fig:bed}).
The experimental setup consists of a temperature-controlled aluminum plate, an RGB camera, and an infrared camera (FLIR A700). 
The aluminum plate serves as a heat testbed, maintaining a boundary temperature map, $T_{\text{bed}}$. 
Both the RGB and infrared cameras are configured to capture corresponding wood RGB and thermal images at an identical spatial resolution of $480 \times 640$ pixels. 
Our primary objective is to measure the steady-state thermal responses of these wood samples.

\noindent \textbf{Measurement Protocol.}
Each wood sample is cut into a thin, square slab with dimensions $L_{x} \times L_{y} \times L_{z}$, where $L_{x} = L_{y} \gg L_{z}$ (e.g., $L_{x} = L_{y} = 100$ mm and $L_{z} = 3$ mm). 
The wood surface ($L_{x} \times L_{y}$) is captured by the RGB camera, while the corresponding thermal response on the surface is recorded by the infrared camera. 
The RGB images of the two opposite surfaces (Side A and Side B) of the wood sample are denoted as $I_{A}$ and $I_{B}$, respectively, and their corresponding surface thermal responses are denoted as $T_{A}$ and $T_{B}$.

During preliminary trials, we observed that thin wood samples are susceptible to slight natural deformation under thermal load, which can lead to inconsistent contact with the testbed. 
To ensure uniform thermal contact resistance across the sample-testbed interface, we apply a standardized metal weight to the top of the wood sample during a 5-minute heating phase, allowing the system to reach thermal equilibrium. 
The weight is then rapidly removed, and 10 consecutive frames of the thermal response are immediately recorded. 
While the removal of the weight introduces a brief transient phase, averaging these 10 frames effectively mitigates random sensor noise and minor transient artifacts, yielding a highly stable quasi-steady-state thermal response.

\begin{figure*}[t]
    \centering
    \includegraphics[width=1.0\textwidth]{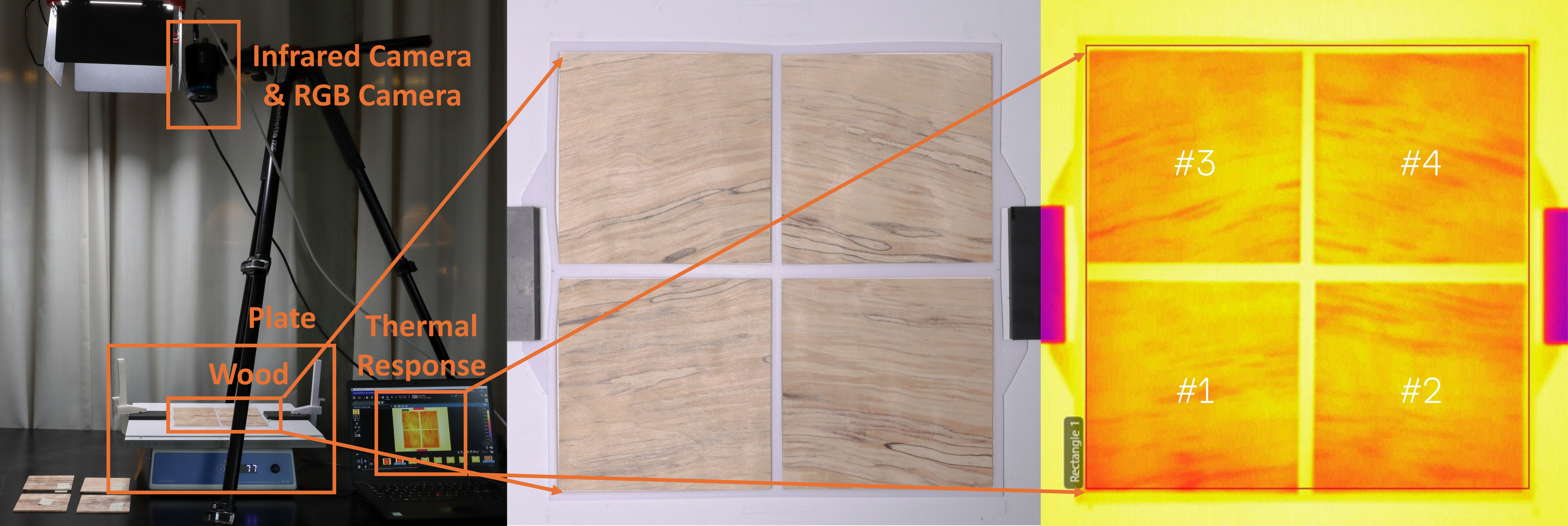}
    \caption{
    The experimental setup for thermal measurement. 
    The experimental setup consists of an aluminum testbed, an RGB camera, and an infrared camera. 
    Wood RGB images and their thermal responses of four wood samples are recorded simultaneously during each batch.
    }
    \label{fig:bed}
\end{figure*}

The measurement procedure follows a strictly standardized protocol: 1) record the testbed temperature, $T_{\text{bed}}$; 
2) apply the wood sample and weight until equilibrium is reached; 
3) measure the quasi-steady-state thermal response of the wood surface. 
Side A and Side B follow this identical protocol.
When measuring one side of a sample, the opposite side maintains direct thermal contact with the testbed, ensuring the bottom surface temperature strictly matches the testbed temperature, $T_{\text{bed}}$.

\noindent \textbf{Data Collection.} 
We collected data for three distinct wood species, including Poplar, Grandis-CC, and Grandis-RC (Figure~\ref{fig:samples}). 
To optimize data acquisition efficiency, we measured the samples in batches of four.
The Poplar dataset comprises 120 Poplar samples or 240 sides A/B.
The Grandis-CC dataset comprises 63 Grandis-CC samples or 126 sides A/B.
The Grandis-RC dataset comprises 53 Grandis-RC samples or 106 sides A/B. 
The paired wood RGB and thermal images were cropped into four individual ones for downstream analysis, understanding, and prediction. 
Specifically, the Poplar samples were cropped to a spatial resolution of $176 \times 176$ pixels, while the Grandis samples were cropped to $128 \times 128$ pixels.

\begin{figure}[t]
    \centering
    \includegraphics[width=1.0\textwidth]{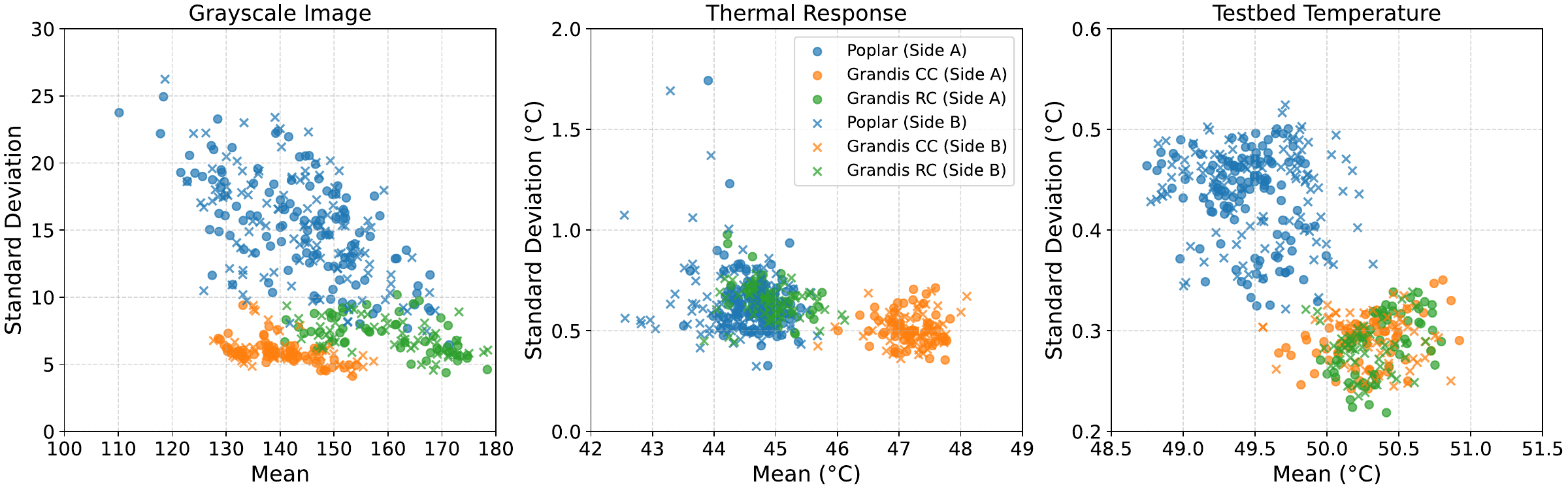}
    \caption{
    Sample-wise distributions of mean and standard deviation values for grayscale images (left), thermal responses (middle), and testbed temperatures (right) across the Poplar, Grandis-CC, and Grandis-RC datasets.
    }
    \label{fig:scatter_distributions}
\end{figure}

\noindent \textbf{Sample-wise Statistics.}
To investigate the inherent randomness of individual measurements within each wood species due to the anisotropy, we analyze sample-wise variability (Figure~\ref{fig:scatter_distributions}).
By plotting the mean against the standard deviation, these distributions highlight the strong statistical consistency between Side A and Side B while capturing the natural heterogeneity of the wood samples.
Specifically, the grayscale intensity reveals that Poplar exhibits high intra-sample visual variance (characterized by a broad spread of mean intensities and higher standard deviations), compared to the relatively uniform Grandis-CC and Grandis-RC.
Similarly, Grandis-CC forms an isolated cluster with the highest mean thermal response and lowest variability.
In contrast, Poplar and Grandis-RC show lower mean thermal response, with Poplar notably containing several high-variance outliers.
Finally, testbed temperature analysis indicates that the heat source beneath Poplar is generally cooler and more variable, while it remains higher and more stable beneath both Grandis-CC and Grandis-RC.
Additional detailed analyses are provided in Appendix~\ref{sec:wood_analysis_app}.

\begin{figure}[t]
    \centering
    \includegraphics[width=1\textwidth]{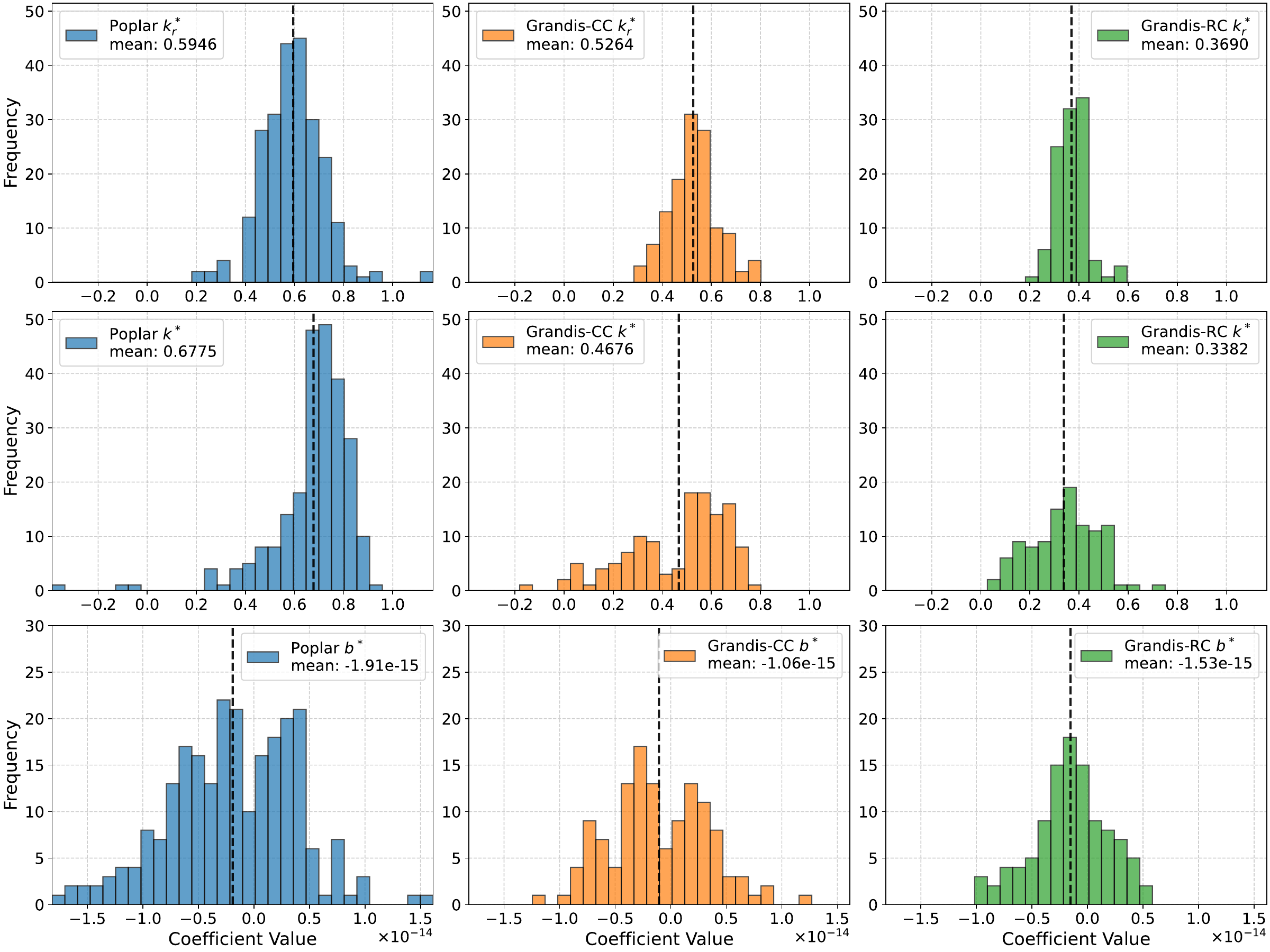}
    \caption{
    Sample-wise distributions of the thermal-to-testbed mapping coefficients $k^*_r$, $k^*$, and $b^*$  across the (Poplar (left), Grandis-CC (middle), and Grandis-RC (right)) datasets.
    }
    \label{fig:coefficient}
\end{figure}
\section{Method} 
\label{sec:physics_informed_learning}
Physics-informed learning aims to embed physical laws directly into a neural network or its loss function. 
To achieve this, \textit{the key idea is to establish a mathematical connection between the physical parameters in the governing equations and the learned feature representations from wood RGB images.}

\noindent \textbf{Wood Thermal Modeling.}
We provide a simplified model in the following and elaborate on the wood thermal modeling in Appendices~\ref{sec:wood_thermal_modeling} and~\ref{sec:re_pde_app}.
To account for individual wood sample variability (Figure~\ref{fig:scatter_distributions}), each thermal response or testbed temperature map is standardized by its own spatial mean and standard deviation, defined as $\tilde{T}^* = (\tilde{T} - \mu) / \sigma$
\begin{align}
    \label{eq:norm_pde_2d_discrete}
    \tilde{T}^*_{\text{mea}} = \underbrace{\left( \frac{ r_{\gamma} \sigma_{\text{bed}}}{\sigma_{\text{mea}}} \right)}_{k^*} \tilde{T}^*_{\text{bed}} + \underbrace{\left( \frac{r_{\gamma} \mu_{\text{bed}} - \mu_{\text{mea}}}{\sigma_{\text{mea}}} \right)}_{b^*} + \underbrace{r_{\alpha} L_{z}^{2} \frac{\partial^2 \tilde{T}^*_{\text{mea}}}{\partial x^2} + r_{\beta} L_{z}^{2} \frac{\partial^2 \tilde{T}^*_{\text{mea}}}{\partial y^2} }_{\text{Spatial Diffusion}},
\end{align}
where $\alpha = k_{x} / k_{z}$, $\beta = k_{y}/k_{z}$, and $\text{Bi} = h L_{z} / k_{z}$ is a Biot number.
We can learn the ratios $r_{\alpha}$, $r_{\beta}$, and $r_{\gamma}$ with physical bounds $r_{\alpha} = \frac{\alpha}{1 + \text{Bi}} > 0, r_{\beta} = \frac{\beta }{1 + \text{Bi}} > 0, r_{\gamma} = \frac{\gamma}{1 + \text{Bi}} > 0$.

The spatial diffusion term represents lateral heat transfer governed by the anisotropic diffusivities ($\alpha$, $ \beta$) inherent to wood anisotropy.
When it reaches the steady state ($b^*=0$, $k^* = \mu_{\text{mea}} \sigma_{\text{bed}} / \mu_{\text{bed}} \sigma_{\text{mea}}$), we have a very special governing equation
\begin{align} 
    \tilde{T}^*_{\text{mea}} = \underbrace{\left( \frac{\mu_{\text{mea}} \sigma_{\text{bed}}}{\sigma_{\text{mea}} \mu_{\text{bed}} } \right)}_{k^*_r} \tilde{T}^*_{\text{bed}} + \underbrace{ r_{\alpha} L_{z}^{2} \frac{\partial^2 \tilde{T}^*_{\text{mea}}}{\partial x^2} + r_{\beta} L_{z}^{2}\frac{\partial^2 \tilde{T}^*_{\text{mea}}}{\partial y^2} }_{\text{Spatial Diffusion}}.
\end{align}

The normalized governing equation~\ref{eq:norm_pde_2d_discrete} is the key equation for analysis and prediction.
In this framework, the slope $k^*$ serves as a dimensionless spatial transfer factor that quantifies the efficiency of heat conduction through the wood thickness. 
The intercept $b^*$ represents the global energy balance residual.
Our empirical analysis (Figure~\ref{fig:coefficient}) confirms $ b^* \approx 0$, indicating that the wood samples consistently reach a global mean equilibrium, where $\mu_{\text{mea}} \approx r_{\gamma} \mu_{\text{bed}}$. 
The coefficients $k^*$ and $k^*_r$ characterize the linear mapping between the testbed temperature $T^*_{\text{bed}}$ and the thermal measurement $T^*_{\text{mea}}$ but represent distinct physical perspectives.
The $k^*$ coefficient quantifies spatial signal preservation, specifically measuring how much of the testbed's intrinsic spatial temperature distribution remains identifiable after undergoing internal diffusion within the wood's structure.
In contrast, $k^*_r$ is calculated from raw magnitudes as the ratio of Coefficients of Variation ($\text{CV}_{\text{bed}}/\text{CV}_{\text{mea}}$).
However, the observed variance (spread) within these $k^*$ distributions (Figure~\ref{fig:coefficient}) poses a significant challenge for model generalization. 
While the tight clustering confirms that the wood samples follow a shared physical law, the internal spread reflects the anisotropic nature of the wood samples, such as localized density and internal structure.

\noindent \textbf{Physics-Informed Convolutional Neural Networks.}
To make Equation~\ref{eq:norm_pde_2d_discrete} suitable for optimizing models via soft regularization in convolutional neural networks, namely Physics-Informed Convolutional Neural Networks (PICNNs), it is discretized using a finite central difference method
\begin{align}
    \label{eq:2d_anisotropic_normalized_discrete}
    R_{i,j} = &\hat{r}_{\alpha, i,j} \left( \tilde{T}^{*}_{\text{mea},i-1,j} - 2\tilde{T}^{*}_{\text{mea},i,j} + \tilde{T}^{*}_{\text{mea},i+1,j} \right) + \hat{r}_{\beta, i,j} \left( \tilde{T}^{*}_{\text{mea},i,j-1} - 2\tilde{T}^{*}_{\text{mea},i,j} + \tilde{T}^{*}_{\text{mea},i,j+1} \right) \notag \\
    &- \tilde{T}^{*}_{\text{mea},i,j} + k^{*}_{i,j} \tilde{T}^{*}_{\text{bed},i,j} + b^{*}_{i,j},
\end{align}
where $\hat{r}_{\alpha,i,j} = r_{\alpha,i,j} L_{z}^2 / (\Delta x)^2$ and $\hat{r}_{\beta,i,j} = r_{\beta,i,j} L_{z}^2 / (\Delta y)^2$.
We learn $\hat{r}_{\alpha}$, $\hat{r}_{\beta}$, and $k^{*}$ using softplus activation, $f(x) = \ln (1 + \exp(x))$, and $b^{*}$ without any activation.

Following physics-informed neural networks~\cite{raissi2017physics, raissi2019physics}, we incorporate the PDE residual as a soft regularization term, i.e., $\mathcal{L} = \mathcal{L}_{\text{data}} + \lambda_{\text{pde}} \mathcal{L}_{\text{pde}}$, where $\lambda_{\text{pde}}$ is a weighting coefficient balancing the data fidelity and physical consistency. 
To address measurement magnitude variability across individual wood samples (Figure~\ref{fig:scatter_distributions}) and mitigate domain shift, we employ a scale- and shift-invariant loss function~\cite{ranftl2020towards, yang2024depth, yang2024depthv2}. 
The predicted thermal response, $\hat{T}_{\text{mea}}$, is aligned with the ground truth, $T_{\text{mea}}$, using scale $s(T_{\text{mea}})$ and shift $t(T_{\text{mea}})$ factors, i.e., $\mathcal{L}_{\text{data}}(T_{\text{mea}}, \hat{T}_{\text{mea}}) = \dfrac{1}{HW} \sum_{i=1}^{HW} \left| s \hat{T}_{\text{mea}, i}^{*} + t - T_{\text{mea},i} \right|$, where $t = \text{mean}(T_{\text{mea}}), \quad s = \sqrt{\dfrac{1}{HW} \sum_{i=1}^{HW} \left(T_{\text{mea},i} - t \right)^2}$.
The PDE residual loss is defined as the mean squared residual over the spatial domain $\Omega$, i.e., $\mathcal{L}_{\text{pde}} = \frac{1}{|\Omega|} \sum_{(i,j) \in \Omega} \left| R_{i,j} \right|^2$.
Because operations in the residual Equation~\ref{eq:2d_anisotropic_normalized_discrete} are fully differentiable, they can be seamlessly embedded into frameworks such as PyTorch~\cite{paszke2019pytorch},  optimizing the parameters via automatic differentiation~\cite{baydin2018automatic}.

\noindent \textbf{Physics-Integrated Convolutional Neural Networks.}
The governing physical laws can be directly embedded into convolutional neural networks as hard constraints, namely Physics-Integrated Convolutional Neural Networks (PInteCNNs). 
We can formulate this general learned mapping as
\begin{equation}
\label{eq:pde_2d_general}
\tilde{T}_{\text{mea}}^{*} = \theta_{\text{bed}} \tilde{T}_{\text{bed}}^{*} + \theta_{s} + \theta_{\alpha} \frac{\partial^2 \tilde{T}_{\text{mea}}^{*} }{\partial x^2} + \theta_{\beta} \frac{\partial^2 \tilde{T}_{\text{mea}}^{*}}{\partial y^2},
\end{equation}
where the PInteCNN processes the wood RGB image to dynamically predict the spatially varying physical parameters $(\theta_{\alpha}, \theta_{\beta}, \theta_{\text{bed}}, \theta_{s})$.
Because the target thermal response, $\tilde{T}_{\text{mea}}^{*}$, appears on both sides of Equation~\ref{eq:pde_2d_general}, we solve it using an explicit approximator-predictor-corrector solver executed directly within the forward pass
\begin{align}
    \text{Approximator ($k=0$):} &\quad \tilde{T}_{0} = \theta_{\text{bed}} \tilde{T}_{\text{bed}}^{*} + \theta_{s}, \\
    \text{Predictor ($k=1$):} &\quad \tilde{T}_{1} = \tilde{T}_{0} + \theta_{\alpha} \frac{\partial^2 \tilde{T}_{0} }{\partial x^2} + \theta_{\beta} \frac{\partial^2 \tilde{T}_{0} }{\partial y^2}, \label{eq:predictor}\\
    \text{Corrector ($k \in [2, K]$)}: &\quad \tilde{T}_{k} = \tilde{T}_{0} + \theta_{\alpha} \frac{\partial^2 \tilde{T}_{k-1} }{\partial x^2} + \theta_{\beta} \frac{\partial^2 \tilde{T}_{k-1} }{\partial y^2}, \label{eq:corrector_k}\\
    \text{Final Output}: &\quad \tilde{T}_{\text{mea}}^{*} = \tilde{T}_{K}. \label{eq:final_output}
\end{align}

By strictly enforcing the governing physics in the PInteCNN's internal operations and unrolling the numerical solver for $K$ steps, this architecture can be effectively trained by optimizing only the data-fidelity loss, $\mathcal{L}_{\text{data}}$.
More information is detailed in Appendix~\ref{sec:architectures}.

\noindent \textbf{Data Partition Strategy.}
Partitioning small, highly heterogeneous physical datasets poses significant risks of domain shift. 
To ensure a rigorously unbiased evaluation, we design a physics-stratified partitioning strategy (Algorithm~\ref{alg:bivariate_split} in Appendix~\ref{sec:proof_bivariate}). 
First, to prevent data leakage, Side A and Side B of the same wood sample are strictly grouped. 
Second, to mitigate domain shift, we evaluate $k^*$ for each side via linear regression. 
We enforce a strict threshold ($\tau = 0.05$). 
Any sample exhibiting $k^* < \tau$ on either side is discarded as an experimental anomaly, ensuring the neural network only learns from valid steady-state physical mappings from thermal responses.

Because valid wood samples exhibit significant structural heterogeneity, heat transfer can still differ substantially between opposite sides. 
Therefore, we represent each retained wood sample using two physical metrics, including mean transfer coefficient $\overline{k}^{*} = (k^*_A + k^*_B) / 2$ and structural asymmetry $\Delta k^* = |k^*_A - k^*_B|$. 
The valid wood samples are sorted by these two physical metrics and distributed sequentially into $M=10$ balanced subsets using a systematic, shifted round-robin allocation. 

While this systematic allocation guarantees that subset statistics converge to the global distribution at a rate of $\mathcal{O}(1/P)$ (where $P$ is the number of complete batches), the limited size of physical datasets often results in a small $P$, which can still yield a minor empirical distribution shift. 
To strictly minimize this residual domain shift, we treat the final split construction as a combinatorial optimization problem. 
Rather than employing a static assignment, we evaluate all possible combinations for assigning the $M=10$ subsets into a 6:2:2 training, validation, and test split. 
For each candidate assignment, we compute the two-sample Kolmogorov-Smirnov (KS) statistics~\cite{massey1951kolmogorov, hodges1958significance, smirnov1939estimate, rabanser2019failing} to quantify the empirical distribution distances of $\overline{k}^{*}$ and $\Delta k^*$ between the generated splits. 
The combination that yields the minimum aggregate KS distance is selected, ensuring the final training, validation, and test sets share virtually identical physical distributions (Appendix~\ref{sec:proof_bivariate}).

\section{Experiments and Results}
\label{sec:exp}
This section compares the predictive performance of the three thermal prediction approaches. 
The data-driven approach serves as a baseline for benchmarking the improvements achieved by incorporating soft physical constraints (PICNN) and hard physical solvers (PInteCNN).
The comprehensive quantitative results of the three approaches are summarized in Table~\ref{tab:whole_model_evaluation}, and qualitative results are presented in Figures~\ref{fig:best_qualitative_results} and~\ref{fig:worst_qualitative_results}.
More results and discussions are provided in Appendices~\ref{sec:ablation} and \ref{sec:data_integration}.

\subsection{Experimental Setup}
\noindent \textbf{Training Data.}
We evaluate our physics-informed deep learning approaches on three real-world wood thermal datasets using the data partition strategy in Algorithm~\ref{alg:bivariate_split}.
We partition the datasets after filtering out potential anomalies and strictly group Side A and Side B from the same wood sample to prevent data leakage.
Each dataset is partitioned into training, validation, and test sub-datasets in a 6:2:2 ratio.

\noindent \textbf{Training Details.}
Given the unique characteristics of the three datasets (Figures~\ref{fig:scatter_distributions} and~\ref{fig:coefficient}), we train three distinct models.
All these models are trained on H100 using the Adam optimizer~\cite{kingma2014adam} with a batch size of $32$.
Instead of using standard 32-bit floating-point (FP32) arithmetic for training models, we use 64-bit floating-point (FP64) arithmetic~\cite{xu2025fp64}.
The learning rate is initially set to $0.01$ using an early-stop mechanism.
Several data augmentations, including horizontal flipping and vertical flipping, are applied to reduce overfitting during model training.

\noindent \textbf{Evaluation Metrics.}
Following~\cite{ranftl2020towards}, we first scale and shift the predicted thermal response to align with the true thermal response.
We then employ two widely used metrics to evaluate the quality of thermal response prediction.
One is the Mean Absolute Error (MAE) metric, i.e., $\text{MAE} = 1/HW \sum_{i=1}^{HW} |T_{i} - \hat{T}_{i}|$.
The other is the Root Mean Squared Error (RMSE) metric, i.e., $\text{RMSE} = \sqrt{1/HW \sum_{i=1}^{HW} \left(T_{i} - \hat{T}_{i} \right)^2}$.
To further evaluate the prediction accuracy, we introduce the percentage of pixels, i.e., $\delta_{01} = \text{max}(T_{i}/\hat{T}_{i}, \hat{T}_{i}/T_{i}) < 1.01$.
We adopt much stricter thresholds than~\cite{ranftl2020towards, yang2024depth, yang2024depthv2} because our thermal prediction achieves higher prediction accuracy.

\subsection{Results on PICNN and {PInteNN}}

\noindent \textbf{The Necessity of Domain-Specific Modeling.}
To comprehensively evaluate the data-driven approach, we measure its predictive performance—quantified by MAE, RMSE, and $\delta_{01}$—across two distinct modeling paradigms: a globally optimized unified model versus three domain-specific models. As illustrated in Table~\ref{tab:whole_model_evaluation}, the domain-specific models consistently yield superior predictive accuracy. Across the Poplar, Grandis-CC, and Grandis-RC datasets, these tailored models achieve strictly lower MAE and RMSE, alongside higher $\delta_{01}$ scores, than the unified baseline. While the unified model successfully captures broad, macro-level thermal trends, its generalized capacity is insufficient to fully adapt to the unique distributional characteristics and structural nuances of individual wood species. Consequently, training domain-specific models remains an essential requirement for minimizing localized predictive errors and achieving optimal, high-fidelity performance across diverse datasets.

\begin{table}[t]
  \caption{
  Quantitative results of the three thermal prediction approaches. 
  The data-driven approach, PICNN, and PInteCNN are evaluated across the Poplar, Grandis-CC, and Grandis-RC datasets using MAE, RMSE, and $\delta_{01}$. 
  Bold values denote the best performance.
  }
  \label{tab:whole_model_evaluation}
  \centering
  \resizebox{\textwidth}{!}{%
  \begin{tabular}{l ccc ccc ccc}
    \toprule
    \multirow{2}{*}{\textbf{Method}} & \multicolumn{3}{c}{\textbf{Poplar}} & \multicolumn{3}{c}{\textbf{Grandis-CC}} & \multicolumn{3}{c}{\textbf{Grandis-RC}} \\
    \cmidrule(lr){2-4} \cmidrule(lr){5-7} \cmidrule(lr){8-10}
    & MAE $\downarrow$ & RMSE $\downarrow$ & $\delta_{01}$ (\%) $\uparrow$ & MAE $\downarrow$ & RMSE $\downarrow$ & $\delta_{01}$ (\%) $\uparrow$ & MAE $\downarrow$ & RMSE $\downarrow$ & $\delta_{01}$ (\%) $\uparrow$ \\
    \midrule
    Data-Driven (Unified)  & 0.3330 & 0.4301 & 72.70 & 0.2512 & 0.3142 & 86.80 & 0.3817 & 0.4721 & 65.27 \\
    Data-Driven (Specific) & 0.3116 & 0.4023 & 75.64 & 0.2051 & 0.2631 & 92.05 & 0.3578 & 0.4526 & 69.00 \\
    PICNN       & \textbf{0.3088} & \textbf{0.3974} & \textbf{76.05} & \textbf{0.2043} & \textbf{0.2619} & \textbf{92.13} & \textbf{0.3456} & \textbf{0.4386} & \textbf{71.57} \\
    PInteCNN    & 0.3096 & 0.3992 & 75.82 & 0.2101 & 0.2694 & 91.35 & 0.3497 & 0.4387 & 70.17 \\
    \bottomrule
  \end{tabular}%
  }
\end{table}

\noindent \textbf{Data-Driven Approach's Performance.}
The data-driven approach performs optimally on the Grandis-CC dataset, achieving the lowest error (MAE of $0.2051$, RMSE of $0.2631$) and the highest prediction accuracy ($\delta_{01}$ of $92.05\%$).
Conversely, it exhibits significant performance degradation on the Grandis-RC dataset (MAE of $0.3578$, RMSE of $0.4526$, and $\delta_{01}$ of $69.00\%$).
The Poplar dataset yields intermediate performance (MAE of $0.3116$, RMSE of $0.4023$, and $\delta_{01}$ of $75.64\%$).
This discrepancy indicates that the data-driven approach is highly sensitive to the structural heterogeneity and complex thermal diffusion patterns inherent to spatially varying wood species.

\noindent \textbf{Impact of Soft Physical Constraints (PICNN).}
According to Table~\ref{tab:model_evaluation}, for the Grandis-CC dataset, the PICNN achieves peak performance at $\lambda_{\text{pde}} = 10^{-1}$ (MAE of $0.2043$), outperforming the baseline. 
Similar improvements occur in the Poplar and Grandis-RC datasets, where the optimal configurations ($\lambda_{\text{pde}} = 10^{-5}$ and $\lambda_{\text{pde}} = 10^{-6}$) reduce the MAEs to $0.3088$ and $0.3456$, respectively.
These gains demonstrate that soft PDE constraints help the convolutional neural network capture underlying thermodynamics.

\noindent \textbf{Impact of Hard Physical Constraints (PInteCNN).}
Based on Table~\ref{tab:model_evaluation}, the PInteCNN demonstrates highly competitive results, particularly at lower corrector step iterations.
Applying one corrector step on the Poplar, Grandis-CC, and Grandis-RC datasets achieves optimal MAEs of $0.3096$, $0.2101$, and $0.3497$, respectively.
However, increasing the corrector steps to $2$ consistently degrades predictive performance across all datasets.
For example, the MAE increases substantially to $0.2824$ on the Grandis-CC dataset. 
This trend reveals a fundamental trade-off inherent to integrating explicit numerical solvers.
While embedding a hard physics solver is advantageous for structural regularization, the numerical evaluation of the Laplacian operator inherently amplifies high-frequency noise present in these textures, causing errors to compound with each successive solver iteration.

\subsection{Interpretation of Physics-Informed Network Parameters}
\noindent \textbf{Parameter Convergence and Stability in PICNN.}
Part I of Table~\ref{tab:merged_picnn_pintecnn} details the convergence of the governing physical parameters ($\hat{r}_{\alpha}$, $\hat{r}_{\beta}$, $k^{*}$, $b^{*}$) for the PICNN. 
A primary observation is the distinct consistency of these parameters across the training, validation, and test splits for any given $\lambda_{\text{pde}}$. 
This stability demonstrates that the PICNN successfully learns generalized physical representations rather than overfitting to the training distribution.
It also demonstrates our data partition strategy (Algorithm~\ref{alg:bivariate_split}).
Furthermore, the magnitude of the penalty weight $\lambda_{\text{pde}}$ directly dictates the physical parameters to which the PICNN converges.  
This highlights how the magnitude of the penalty weight $\lambda_{\text{pde}}$ dictates the fundamental trade-off between data-driven feature extraction and strict adherence to physical conservation laws.

\begin{figure}[t]
  \centering
  \includegraphics[width=\textwidth]{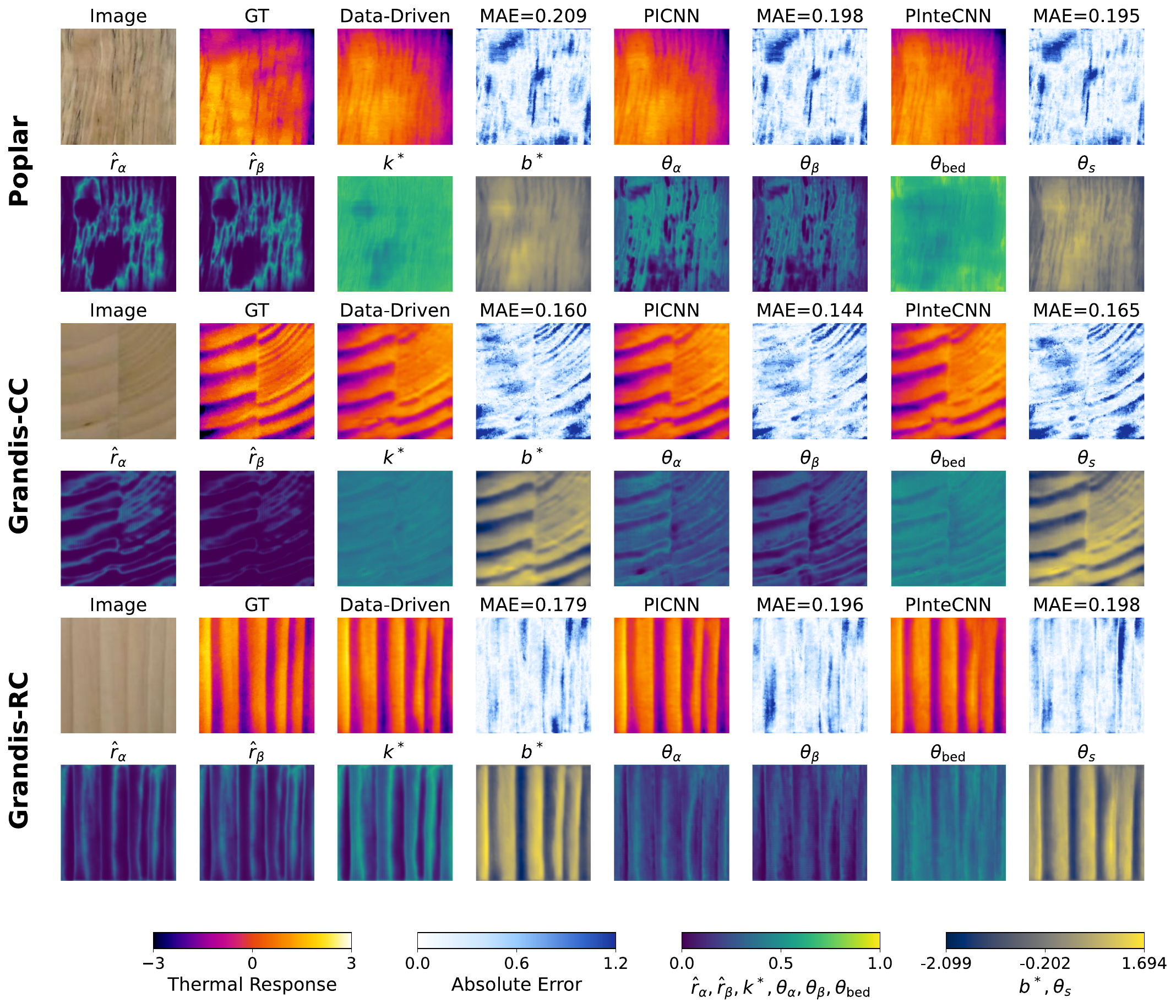} 
  \caption{
  Qualitative results of the best-performing test samples (minimum MAE) across the Poplar, Grandis-CC, and Grandis-RC datasets.
  }
  \label{fig:best_qualitative_results}
\end{figure}

\begin{figure}[t]
  \centering
  \includegraphics[width=\textwidth]{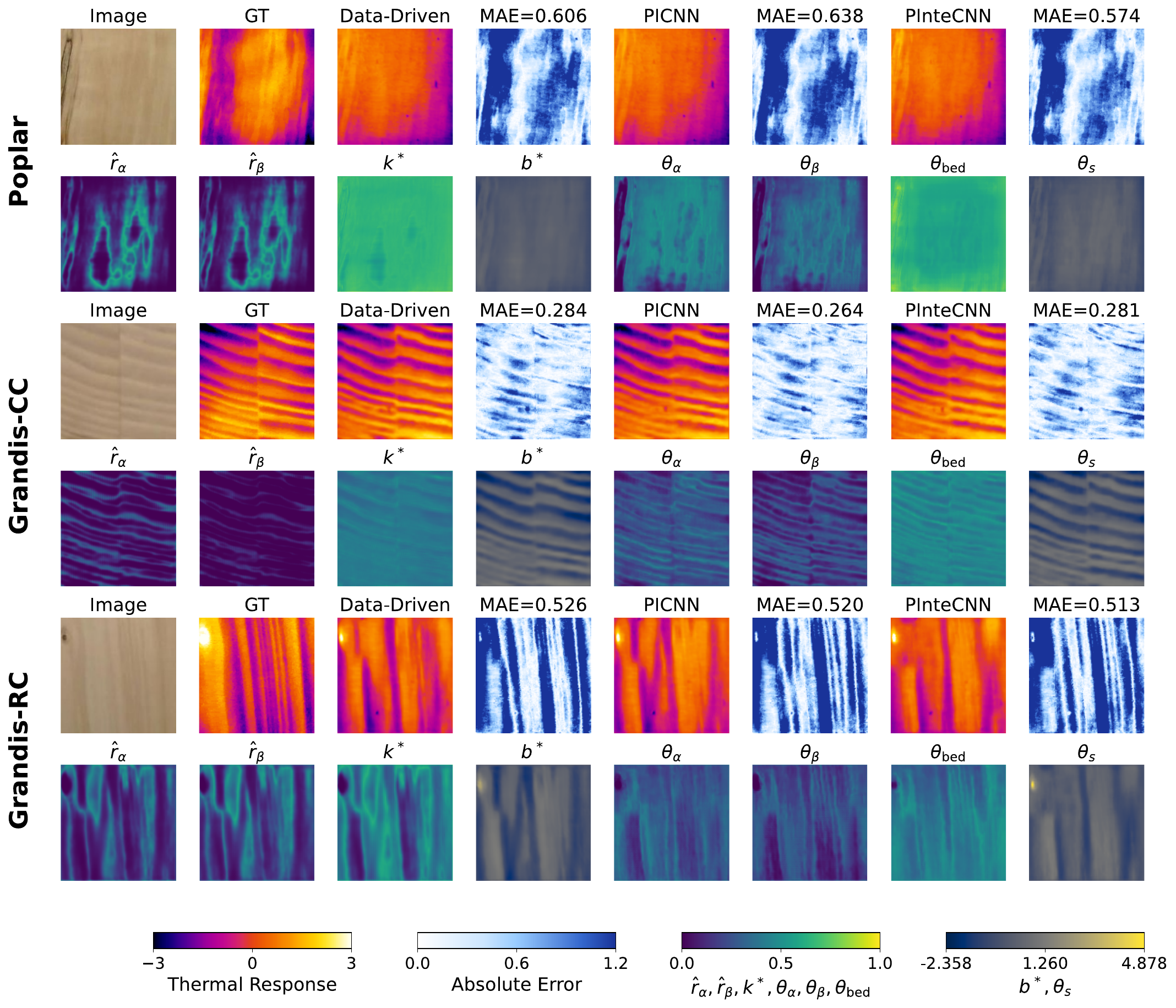}
  \caption{
  Qualitative results of the worst-performing test samples (maximum MAE) across the Poplar, Grandis-CC, and Grandis-RC datasets.}
  \label{fig:worst_qualitative_results}
\end{figure}

\noindent \textbf{Parameter Dynamics and Solver Integration in PInteCNN.}
Part II extends this physical interpretation to the PInteCNN, which features a hard-coded physical solver. 
Consistent with the PICNN, the spatially averaged parameters ($\theta_{\alpha}$, $\theta_{\beta}$, $\theta_{\text{bed}}$, $\theta_{\text{s}}$) exhibit robust generalization, remaining stable across data splits.  
While the one-step configuration maintains minimum error, increasing the computational burden to $2$ steps consistently degrades predictive accuracy. 
Forcing the PInteCNN to execute $2$ steps causes the parameters to fluctuate unpredictably (e.g., in Grandis-CC, $\theta_{\alpha}$ rises back to $0.29$ and $\theta_{\text{bed}}$ spikes to $0.56$). 
This parameter instability directly leads to error accumulation and numerical divergence when mapping high-contrast wood textures across multiple hard-coded solver iterations.

\subsection{Analysis of Topological Variance and Model Stability}
To comprehensively evaluate model robustness, we conduct a sequential data integration ablation study (Appendix~\ref{sec:data_integration}) that analyzes the expected performance (Main Path) across statistics-matched (Best-First), diversity-driven (Worst-First), and stochastic (Random) training trajectories.
This analysis reveals a consistent scaling paradigm across all three datasets. 
Under extreme data scarcity, the data-driven approach demonstrates a notable capacity to rapidly adapt to localized distributions, achieving competitive initial error rates compared to physics-informed baselines. 
As the integration sequence progresses, however, this data-driven approach frequently experiences diminishing returns and forms distinct performance plateaus (as observed in Poplar and Grandis-RC). 
In contrast, the physics-informed models—particularly the PICNN—exhibit more robust asymptotic scaling across the evaluated dataset topologies.
While embedding physical constraints can initially exacerbate sequence-dependent variance under extreme sparsity (as observed in the early stages of Grandis-RC), it ultimately provides the structural scaffolding required to bypass premature statistical plateaus (Poplar), sustain steeper learning trajectories against diminishing statistical returns (Grandis-CC), and map complex structural boundaries as data volume scales. 
Ultimately, these expected performance trajectories empirically demonstrate that while statistical architectures excel at early sparse learning, physical regularization provides a critical mechanism to bypass plateaus and sustain continuous improvement.

\noindent \textbf{Balancing Generalizability, Interpretability, and Diversity.}
The integration of physics into deep learning should balance among model generalizability, physical interpretability, and the capability of handling diverse wood samples within a given wood type.
Individual wood samples within a given wood type exhibit unique thermal variability due to their natural anisotropy (Figures~\ref{fig:scatter_distributions} and~\ref{fig:coefficient}).
The data-driven approach can provide empirical solutions on specific datasets by learning an implicit mapping directly from wood RGB images and testbed temperature maps.
However, their black-box nature obscures the underlying thermodynamic mechanisms. 
The PICNN and PInteCNN introduce domain-specific inductive biases, guiding predictions toward established heat transfer principles. 
While the PICNN implements this through soft regularization ($\lambda_{\text{pde}}$), the PInteCNN establishes a more interpretable structure by explicitly predicting spatially varying physical parameters ($\theta_{\alpha}, \theta_{\beta}, \theta_{\text{bed}}, \theta_{s}$) that govern an embedded numerical solver.
We adjust the PDE penalty in the PICNN or constrain the solver iterations in the PInteCNN. 
This provides a practical means to balance strict physical interpretability with the flexibility needed to approximate the distinct thermal behaviors of individual wood samples under limited data constraints.

\noindent \textbf{Limitations of 2D Spatial Simplification.}
A primary limitation of the current physical formulation is its dependence on a two-dimensional spatial simplification. 
We approximate the wood sample as a 2D plane and model depth-wise heat transfer as a linear transformation of the testbed temperature, ignoring the three-dimensional internal structure of the wood sample.
Subsurface volumetric features, such as internal knots and localized density, can alter surface thermal diffusion. 
Because the 2D governing equation inherently lacks the capacity to represent these depth-dependent thermodynamic interactions, the resulting thermal predictions may fail to capture highly localized surface anomalies. 
Future research could explore the integration of 3D transient heat conduction solvers during neural network training.

\section{Conclusion}
\label{sec:conclusion}
This study leverages physics-informed deep learning frameworks to analyze, understand, and predict wood thermal responses.
Through evaluations across three real-world wood datasets, we demonstrate that the incorporation of physical constraints alters model behavior, presenting distinct trade-offs among predictive accuracy, computational stability, and physical interpretability.
Our findings indicate that while data-driven approaches can yield reasonable metrics, they exhibit performance degradation when confronted with structural heterogeneity in the limited-data setting. 
To address this limitation, the Physics-Informed Convolutional Neural Network (PICNN) leverages a soft PDE penalty. 
This constraint acts as a powerful regularizer, guiding the network to a comparatively optimal predictive accuracy. 
The Physics-Integrated Convolutional Neural Network (PInteCNN) explicitly embeds an iterative numerical solver within its forward pass. 
This architecture provides a highly interpretable mechanism by extracting explicit spatial physical parameter maps and delegating the diffusion steps to a mathematical solver.
Embedding physical inductive biases into convolutional neural networks is a highly effective strategy for overcoming data limitations. 
Ultimately, both the PICNN and PInteCNN successfully enhance empirical performance and scientific interpretability.

\section*{Acknowledgment}
This study is financially supported by the Villum Synergy Grant (Grant No. 57401).

\bibliography{egbib}

\appendix
\section{ Wood Thermal Analysis}
\label{sec:wood_analysis_app}

\paragraph{Overall Sample Statistics.}
Table \ref{tab:compact_variation_stats} summarizes the overall statistics for grayscale image intensities, testbed temperatures, and thermal responses for the three wood species across both surfaces (Side A and Side B). 
We characterize their overall distributions using mean $\mu$ and standard deviation $\sigma$ values.
The testbed temperatures remain highly consistent, averaging near $50^\circ \mathrm{C}$ across all wood species.
Thermally, the average thermal responses distinctly separate Grandis-CC ($47.2 ^\circ \mathrm{C}$) from Grandis-RC ($44.9^\circ \mathrm{C}$) and Poplar ($44.5^\circ \mathrm{C}$), which share more heavily overlapping thermal profiles (Figure~\ref{fig:overall_histograms}). 
Visually, the grayscale image intensities further differentiate the wood species.
Grandis-RC displays the highest mean intensity ($\mu \approx 160$), while Poplar demonstrates the broadest distribution ($\sigma \approx 19.5$), capturing its wider natural variance. 
Across all modalities, the data highlights a strong statistical consistency between Side A and Side B.

\paragraph{Sample-wise Statistics.}
To investigate the inherent randomness of individual measurements within each wood species due to the anisotropy, we analyze sample-wise variability (Figure~\ref{fig:scatter_distributions}).
By plotting the mean against the standard deviation, these distributions highlight the strong statistical consistency between Side A and Side B while capturing the natural heterogeneity of the wood samples.
Specifically, the grayscale intensity reveals that Poplar exhibits high intra-sample visual variance (characterized by a broad spread of mean intensities and higher standard deviations), compared to the relatively uniform Grandis-CC and Grandis-RC.
Similarly, Grandis-CC forms an isolated cluster with the highest mean thermal response and lowest variability.
In contrast, Poplar and Grandis-RC show lower mean thermal response, with Poplar notably containing several high-variance outliers.
Finally, testbed temperature analysis indicates that the heat source beneath Poplar is generally cooler and more variable, while it remains higher and more stable beneath both Grandis-CC and Grandis-RC.

\paragraph{Inverse Visual-Thermal Conductance Relationship.}
To evaluate the relationship between visual characteristics and thermal behavior, we conduct a pixel-to-pixel analysis (Figure~\ref{fig:pixel-to-pixel_analysis}) correlating grayscale image intensities with the temperature gradient between the testbed $T_{\text{bed}}$ and the wood surface $T$. 
This thermal difference is quantified as $\Delta T = T_{\text{bed}} - T$ and inverted ($1/\Delta T$) to serve as a proxy for thermal conductance. 
Grouping these pixel-level metrics into grayscale bins effectively reduces localized noise and reveals macroscopic physical trends. 
The binned analysis demonstrates a distinct inverse correlation across all samples.
As grayscale image intensity increases, the mean $1/\Delta T$ value consistently transitions from high to low. 
This indicates that \textit{visually darker regions possess higher thermal conductance, whereas brighter regions act as effective thermal insulators with higher resistance}. 
This high-to-low trend is particularly evident in the Grandis-CC and Grandis-RC datasets, which exhibit steep, consistent downward slopes and narrow standard deviations. 
Poplar also exhibits an inverse relationship, with thermal conductance decreasing slightly but consistently as grayscale image intensity increases.

\begin{figure}[t]
    \centering
    \includegraphics[width=1.0\textwidth]{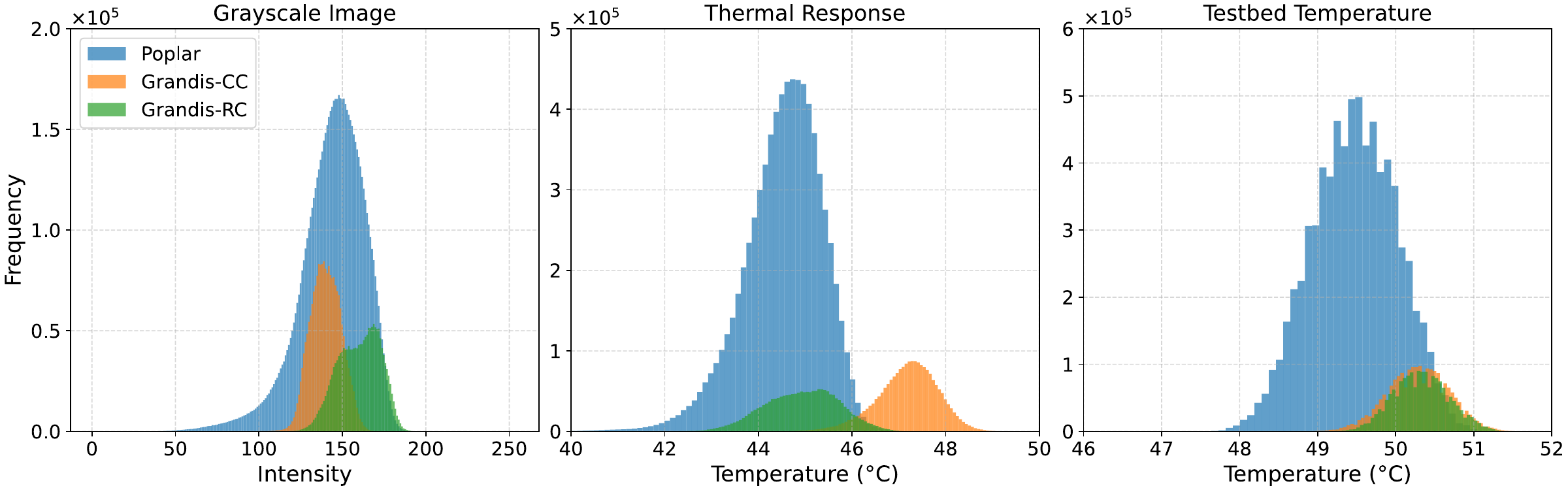}
    \caption{
    Overall sample distributions of grayscale image intensities (left), thermal responses (middle), and testbed temperatures (right) across the Poplar, Grandis-CC, and Grandis-RC datasets.
    }
    \label{fig:overall_histograms}
\end{figure}

\begin{table}[t]
\caption{
     Statistical information of grayscale image intensities, thermal responses, and testbed temperatures across the Poplar, Grandis-CC, and Grandis-RC datasets.
    The results are presented in a $\mu \pm \sigma$ format.
}
\label{tab:compact_variation_stats}
\centering
\resizebox{\textwidth}{!}{
\begin{tabular}{l cc cc cc}
\toprule
\multirow{2}{*}{\textbf{Wood}} & \multicolumn{2}{c}{\textbf{Grayscale Image}} & \multicolumn{2}{c}{\textbf{Thermal Response ($ ^\circ \mathrm{C}$)}} & \multicolumn{2}{c}{\textbf{Testbed Temperature} ($ ^\circ \mathrm{C}$)} \\
\cmidrule(lr){2-3} \cmidrule(lr){4-5} \cmidrule(lr){6-7}
& \textbf{Side A} & \textbf{Side B} & \textbf{Side A} & \textbf{Side B} & \textbf{Side A} & \textbf{Side B} \\
\midrule
Poplar     & $143.82 \pm 19.94$ & $143.83 \pm 19.07$ & $44.65 \pm 0.75$ & $44.39 \pm 0.91$ & $49.43 \pm 0.51$ & $49.51 \pm 0.58$ \\
Grandis-CC & $139.58 \pm 8.90$  & $140.80 \pm 9.40$  & $47.17 \pm 0.64$ & $47.19 \pm 0.66$ & $50.26 \pm 0.40$ & $50.32 \pm 0.39$ \\
Grandis-RC & $160.24 \pm 12.77$ & $160.16 \pm 12.21$ & $44.90 \pm 0.74$ & $44.96 \pm 0.82$ & $50.40 \pm 0.36$ & $50.25 \pm 0.34$ \\
\bottomrule
\end{tabular}
}
\end{table}

\begin{figure}[t]
    \centering
    \includegraphics[width=1.0\textwidth]{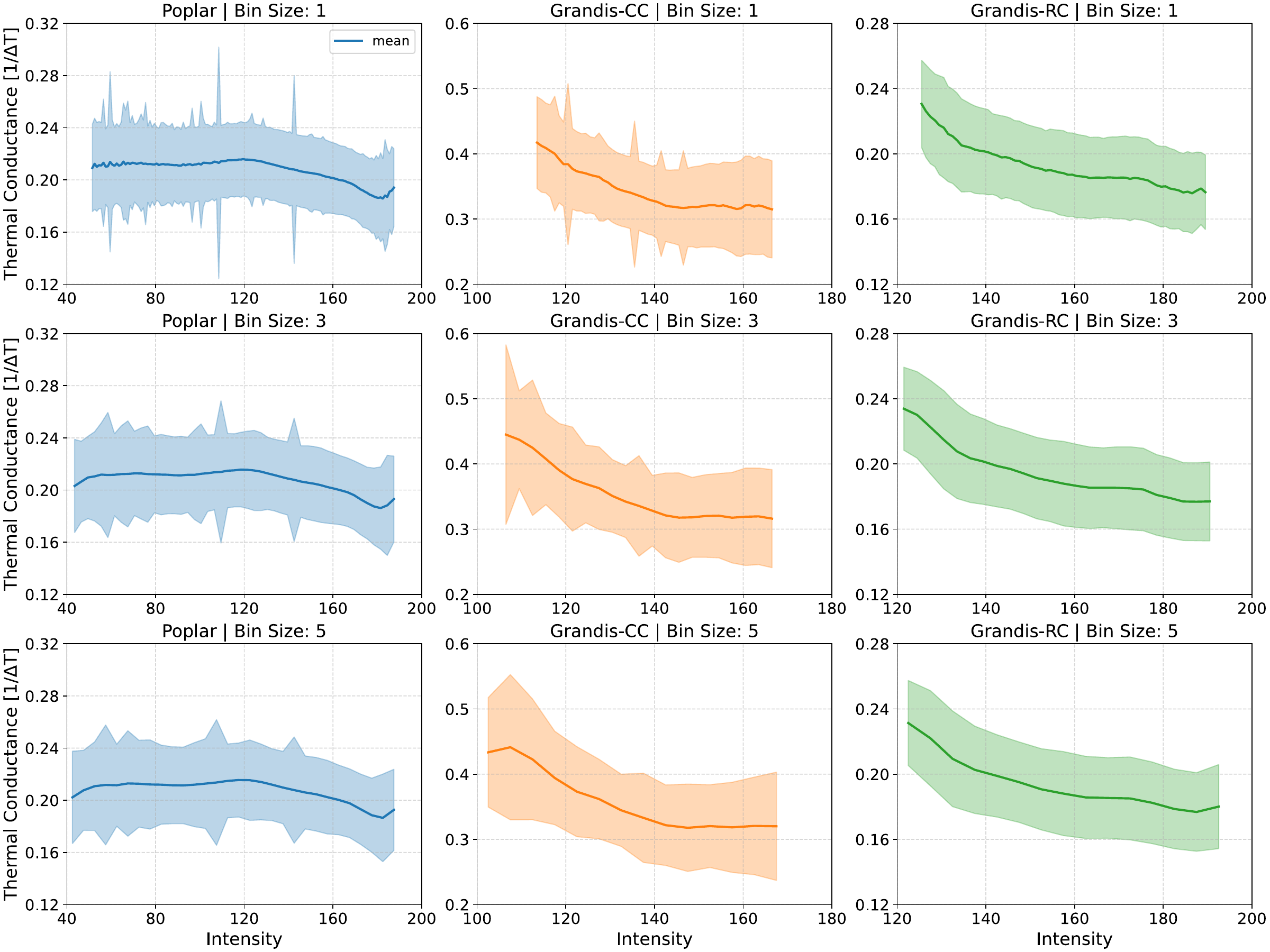}
    \caption{
    Pixel-to-pixel binned distributions of mean and standard deviation values for thermal conductance ($1/\Delta T$) as a function of grayscale image intensities across the Poplar (left), Grandis-CC (middle), and Grandis-RC (right) datasets at varying bin sizes.
    }
    \label{fig:pixel-to-pixel_analysis}
\end{figure}

\paragraph{Similarity in Feature Space.}
While the pixel-level analysis reveals a clear empirical correlation between grayscale intensity and thermal conductance, it ignores the underlying morphology of the wood surface.
To capture the cross-modal morphological alignment between visual appearance and thermal response, we analyze their relationship within the deep feature space of DINOv3~\cite{simeoni2025dinov3}, which is based on small, base, and large Vision Transformers~\cite{dosovitskiy2020image} (ViT-S, ViT-B, ViT-L). 
We extract the output of each transformer block across the network's depth, yielding 12 intermediate feature maps for ViT-S and ViT-B, and 24 for ViT-L. 
Each map is reshaped into a spatial tensor of dimension $c \times h \times w$.
To quantify the cross-modal morphological alignment within these feature maps, each feature map is normalized along the channel dimension $c$, and the cosine similarity is averaged across the $h \times w$ spatial dimension
\begin{equation}
    \text{sim}(I, T) = \frac{1}{h w} \sum_{j=1}^{h} \sum_{i=1}^{w} \text{cos\_sim}\left(\frac{\phi(I)_{i,j}}{\|\phi(I)_{i,j}\|}, \frac{\phi(T)_{i,j}}{\|\phi(T)_{i,j}\|}\right),
\end{equation}
where $\phi(\cdot)_{i,j} \in \mathbb{R}^c$ denotes the $c$-dimensional feature vector at spatial location $(i,j)$.

To track this relationship across network depth, Figure~\ref{fig:similarity} illustrates the mean similarity scores at each layer, calculated across all wood samples for each wood type. 
Across all datasets and model scales, the cross-modal morphological alignment follows a distinct nonlinear trajectory. 
Similarity begins at a moderate baseline in shallow layers, rises to a prominent peak in the intermediate layers (eg, layers 4--5 for ViT-B and 8--10 for ViT-L), and subsequently declines in the deepest layers.
The models exhibit distinct behaviors based on their capacity.
In the smaller ViT-S architecture, similarity increases rapidly and reaches a sustained plateau (layers 4--7), suggesting consistent feature preservation. 
The larger ViT-B and ViT-L models exhibit more localized alignment peaks (around layer 5 for ViT-B and layers 8--10 for ViT-L), followed by a pronounced decline as the network depth increases.
This trajectory also provides critical insight into the nature of the RGB-thermal relationship. 
The deepest layers in high-capacity models like ViT-L drive representations toward modality-divergent abstract semantics. However, the intermediate layers optimally isolate the shared physical structures (eg, grain geometry and morphology) that govern both optical patterns and heat dissipation.
This peak in similarity confirms that the relationship between RGB and thermal responses is fundamentally rooted in the wood's underlying structural morphology.

\begin{figure}[t]
    \centering
    \includegraphics[width=1\textwidth]{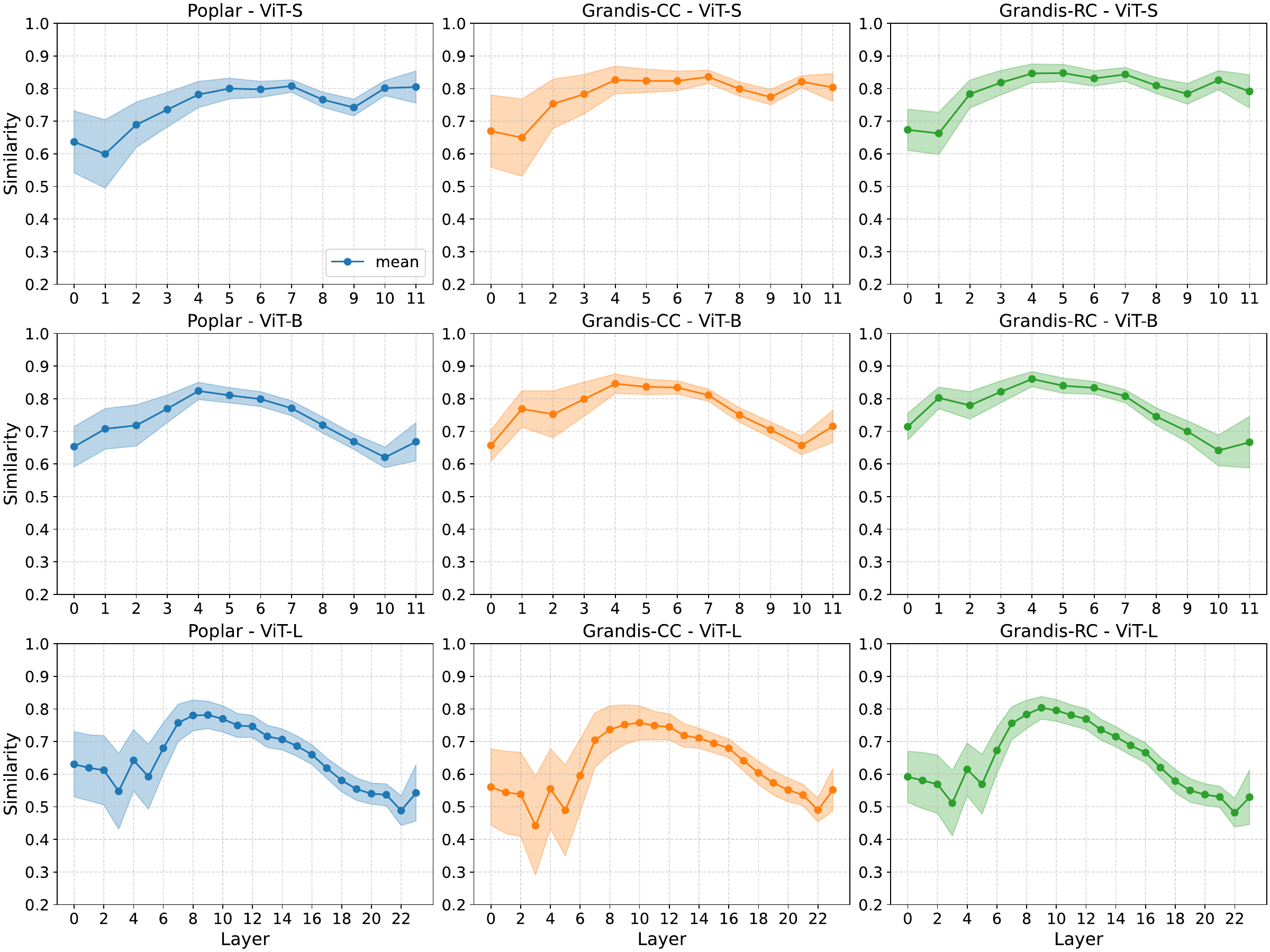}
    \caption{
    Layer-wise RGB image-thermal feature similarity for DINOv3 models~\cite{simeoni2025dinov3} (top to bottom: ViT-S, ViT-B, ViT-L) across the Poplar (left), Grandis-CC (middle), and Grandis-RC (right) datasets.
    }
    \label{fig:similarity}
\end{figure}

\section{Wood Thermal Modeling}
\label{sec:wood_thermal_modeling}

\paragraph{Wood Heat Transfer Path.}
The heat transfer path originates at the aluminum plate and propagates from the bottom to the top surface of the wood sample through heat conduction. 
At the top surface, heat dissipates into the ambient environment via natural convection and thermal radiation. 
Ultimately, the infrared camera captures the surface thermal response via thermal radiation.
Based on this heat transfer path, we can establish a unified mathematical model.

\paragraph{Governing Equation.}
With no internal heat source, heat conduction within a wood sample at the steady state is described as the steady heat conduction equation~\cite{Cengel2006}
\begin{align}\label{eq:3d_anisotropic_general1}
\frac{\partial}{\partial x} \left( k_{x} \frac{\partial T}{\partial x} \right)
+\frac{\partial}{\partial y} \left( k_{y} \frac{\partial T}{\partial y} \right) +
\frac{\partial}{\partial z} \left( k_{z} \frac{\partial T}{\partial z} \right) = 0,
\end{align}
where $T = T(x, y, z)$ is a temperature field and $k_x$, $k_y$, and $k_z$ are thermal conductivities along the $x$, $y$, and $z$ directions, respectively.
Assuming that the thermal conductivities $k_{x}$, $k_{y}$, and $k_{z}$ are uniform over a small neighborhood, Equation~\ref{eq:3d_anisotropic_general1} is reduced to 
\begin{align}
k_{x} \frac{\partial^2 T}{\partial x^2} + \frac{\partial k_{x}}{\partial x} \frac{\partial T}{\partial x} + k_{y} \frac{\partial^2 T}{\partial y^2} + \frac{\partial k_{y}}{\partial y} \frac{\partial T}{\partial y} + k_{z} \frac{\partial^2 T}{\partial z^2} + \frac{\partial k_{z}}{\partial z} \frac{\partial T}{\partial z} &= 0, \\
k_{x} \frac{\partial^2 T}{\partial x^2} +
k_{y} \frac{\partial^2 T}{\partial y^2} +
k_{z} \frac{\partial^2 T}{\partial z^2} &= 0. \label{eq:3d_anisotropic_simple1}
\end{align}

\paragraph{Boundary Conditions.}
Since the bottom surface directly contacts the testbed, the bottom surface satisfies a Dirichlet boundary condition
\begin{equation}
    T(x, y, z=0) = T_{\text{bed}}.
\end{equation}
The wood sample is exposed to the air, and the top surface satisfies a Robin boundary condition
\begin{equation}
\left . - k_{z} \frac{\partial T} {\partial z} \right|_{L_z} = h_{c} \left( T_{\text{top}} - T_{\infty} \right) + \sigma \varepsilon \left(T^{4}_{\text{top}} - T_{\infty}^{4} \right), \label{eq:robin}
\end{equation}
where $h_{c}$ is the convective coefficient.
$T_{\text{top}}$ is the top surface temperature.
$T_{\infty}$ is the ambient temperature.
$\varepsilon$ is the emissivity of the top surface.
$\sigma$ is the Stefan-Boltzmann constant.
To simplify Equation~\ref{eq:robin}, we approximate $\sigma \varepsilon \left(T^{4}_{\text{top}} - T_{\infty}^{4} \right)$ as $4 \sigma \varepsilon T_{\infty}^{3} \left(T_{\text{top}} - T_{\infty} \right)$
\begin{align}
\left . -k_{z}  \frac{\partial T} {\partial z} \right|_{L_z} = h \left( T_{\text{top}} - T_{\infty} \right),
\end{align}
where $h = h_{c} + 4\sigma \varepsilon T_{\infty}^{3}$ is the equivalent convective coefficient.
The other lateral surfaces satisfy Neumann boundary conditions
\begin{align}
\left. \frac{\partial T} {\partial x} \right |_{0}=0, 
\left. \frac{\partial T} {\partial x} \right |_{L_{x}}=0, 
\left. \frac{\partial T} {\partial y} \right |_{0}=0, 
\left. \frac{\partial T} {\partial y} \right |_{L_{y}}=0.
\end{align}

\paragraph{Measurement Formula for Infrared Cameras.}
Since the distance between the wood sample and the infrared camera is short, atmospheric attenuation is negligible. 
The total spectral radiance captured by the camera lens,$L_{\text{cam}}$, can be modeled from two distinct perspectives: the camera's internal calculation and the true physical emission.
First, based on the camera's internal software, it processes $L_{\text{cam}}$ using a predefined emissivity $\bar{\varepsilon}$ to output a measured radiance $L_{\text{mea}}$, while accounting for the reflected radiance $L_{\text{refl}}$
\begin{align}
    \label{eq:actual_radiance}
    L_{\text{cam}} = \bar{\varepsilon} L_{\text{mea}} + (1 - \bar{\varepsilon}) L_{\text{refl}}.
\end{align}
Second, in physical reality, the actual emissivity of the wood $\varepsilon$ differs slightly from the fixed setting $\bar{\varepsilon}$. 
The true radiance emitted by the wood surface is $L_{\text{wood}}$
\begin{align}\label{eq:theoretical_radiance}
    L_{\text{cam}} = \varepsilon L_{\text{wood}} + (1 - \varepsilon) L_{\text{refl}}.
\end{align}
Since the total energy hitting the sensor $L_{\text{cam}}$ must be identical in both scenarios, combining Equation~\ref{eq:actual_radiance} and Equation~\ref{eq:theoretical_radiance} yields 
\begin{align} 
    \bar{\varepsilon} (L_{\text{mea}} - L_{\text{refl}}) = \varepsilon (L_{\text{wood}} - L_{\text{refl}}). \label{eq:meas_formula}
\end{align}
To convert radiance into temperature, we approximate the band-limited radiance $L_{\text{band}}(T)$ using $c T^{4}$, i.e., $L_{\text{band}}(T) = c T^{4}$, where $c$ is a constant.
We define the camera's displayed temperature as $T_{\text{mea}}$, the true top surface temperature as $T_{\text{top}}$, and the ambient reflection temperature as $T_{\text{refl}} = T_{\infty}$.
Substituting $L_{\text{mea}} = c T_{\text{mea}}^{4}$, $L_{\text{wood}} = c T_{\text{top}}^{4}$, and $L_{\text{refl}} = c T_{\infty}^{4}$ into Equation~\ref{eq:meas_formula} yields
\begin{align}
T_{\text{mea}}^{4} - T_{\infty}^{4} &= \frac{\varepsilon}{\bar{\varepsilon}} (T^{4}_{\text{top}} - T_{\infty}^{4}). \label{eq:radiance}
\end{align}
Following the first-order approximation ($T^{4} - T_{\infty}^{4} \approx 4T_{\infty}^{3} (T - T_{\infty}) $),
Equation~\ref{eq:radiance} is reduced to
\begin{align}
    \tilde{T}_{\text{mea}} &= \gamma \tilde{T}_{\text{top}},
    \label{eq:linearization}
\end{align}
where $\gamma = \varepsilon/\bar{\varepsilon}$, $\tilde{T}_{\text{mea}} = T_{\text{mea}} - T_{\infty}$, and $\tilde{T}_{\text{top}} = T_{\text{top}} - T_{\infty}$.

\section{Reparameterizing Governing Equations}
\label{sec:re_pde_app}
To make Equation~\ref{eq:3d_anisotropic_simple1} suitable for our 2D thermal prediction, we eliminate the second derivative with respect to $z$.
We define the average temperature across the thickness of the wood sample as
\begin{align}
    \bar{T}(x,y) = \dfrac{1}{L_{z}} \int_{0}^{L_{z}} T(x, y, z) \mathrm{d} z.
\end{align}
We assume that, for a single pixel, $k_{z}$ is constant along the $z$ direction, integrating Equation~\ref{eq:3d_anisotropic_simple1} over $z \in [0, L_{z}]$ yields
\begin{align}
    \dfrac{1}{L_{z}} \int_{0}^{L_{z}} 
    \left[ k_{x} \frac{\partial^2 T}{\partial x^2} +
    k_{y} \frac{\partial^2 T}{\partial y^2} + k_{z}\frac{\partial^2 T}{\partial z^2} \right] \mathrm{d} z = 0,
\end{align}
and exchanging the derivative and integral for $x$ and $y$ yields
\begin{align}
    k_{x}
    \frac{\partial^2 \bar{T} }{\partial x^2} + k_{y}  \frac{\partial^2 \bar{T}}{\partial y^2}
    &+ \dfrac{k_{z}}{L_{z}} \left[
    \left . \frac{\partial T}{\partial z} \right |_{L_{z}} -
    \left . \frac{\partial T}{\partial z} \right |_{0}
    \right] = 0. \label{eq:integral_z}
\end{align}
For heat flux at $z=0$ and $z=L_{z}$, it yields
\begin{align}
    \left . \dfrac{\partial T}{\partial z} \right |_{L_{z}} \approx - \dfrac{h}{k_{z}} \left( \bar{T} - T_{\infty} \right), \quad 
    \left . \dfrac{\partial T}{\partial z} \right |_{0} \approx \dfrac{\bar{T} - T_{\text{bed}}}{L_{z}}.
\end{align}
Substituting heat flux into Equation~\ref{eq:integral_z} yields
\begin{align}
    \label{eq:simplified_2d}
    k_{x}
    \frac{\partial^2 \bar{T} }{\partial x^2} + k_{y}  \frac{\partial^2 \bar{T}}{\partial y^2}
    - \dfrac{h}{L_{z}} \left( \bar{T} - T_{\infty} \right) -
    \dfrac{k_{z}}{L_{z}^{2}} \left(\bar{T} - T_{\text{bed}} \right)
     &= 0.
\end{align}

\paragraph{Full Governing Equation.}
Since the wood sample is sufficiently thin, we can approximate $\bar{T} \approx T_{\text{top}}$.
Substituting $\tilde{T}_{\text{top}} = T_{\text{top}} - T_{\infty}$, $\tilde{T}_{\text{bed}} = T_{\text{bed}} - T_{\infty}$, and Equation~\ref{eq:linearization} into Equation~\ref{eq:simplified_2d} yields
\begin{align}
    \alpha L_{z}^{2} \frac{\partial^2  \tilde{T}_{\text{mea}} }{\partial x^2} &+ \beta L_{z}^{2} \frac{\partial^2  \tilde{T}_{\text{mea}}}{\partial y^2} - (1 + \text{Bi})\tilde{T}_{\text{mea}} + \gamma \tilde{T}_{\text{bed}} = 0, \label{eq:pde_2d1}
\end{align}
where $\alpha = k_{x} / k_{z}$, $\beta = k_{y}/k_{z}$, and $\text{Bi} = h L_{z} / k_{z}$ is a Biot number.
The boundary conditions at the top and bottom surfaces are included in Equation~\ref{eq:pde_2d1}, and the boundary conditions at the other lateral surfaces maintain
\begin{align}
\left. \frac{\partial \tilde{T}_{\text{mea}}} {\partial x} \right |_{0}=0, 
\left. \frac{\partial \tilde{T}_{\text{mea}}} {\partial x} \right |_{L_{x}}=0,
\left. \frac{\partial \tilde{T}_{\text{mea}}} {\partial y} \right |_{0}=0, 
\left. \frac{\partial \tilde{T}_{\text{mea}}} {\partial y} \right |_{L_{y}}=0.
\end{align}
Based on Equation~\ref{eq:pde_2d1}, the parameters $\alpha$, $\beta$, $\gamma$, and $\text{Bi}$ have distinct physical meanings and their theoretical physical bounds
\begin{align}
    \alpha > 0, \quad \beta > 0, \quad  \text{Bi} > 0, \quad \gamma > 0.
\end{align}
The loose upper bound for $\gamma$ accommodates physically plausible variations in actual wood emissivity relative to standard default camera settings (typically $\bar{\varepsilon} \approx 0.95$). 
These parameters are learned from RGB images only, which can be parameterized as $\alpha = \alpha(I)$, $\beta = \beta(I)$, $\gamma = \gamma(I)$, and $\text{Bi} = \text{Bi}(I)$.

\paragraph{Reduced Governing Equation.}
To reduce the number of learnable parameters, we can further rewrite Equation~\ref{eq:pde_2d1} as
\begin{align}
    \label{eq:pde_2d_simple}
    \tilde{T}_{\text{mea}} &=  r_{\gamma} \tilde{T}_{\text{bed}} + r_{\alpha} L_{z}^{2} \frac{\partial^2 \tilde{T}_{\text{mea}} }{\partial x^2} + r_{\beta} L_{z}^{2} \frac{\partial^2 \tilde{T}_{\text{mea}}}{\partial y^2},
\end{align}
and learn the ratios $r_{\alpha}$, $r_{\beta}$, and $r_{\gamma}$ with physical bounds
\begin{align}
    r_{\alpha} = \frac{\alpha}{1 + \text{Bi}} > 0,\quad  r_{\beta} = \frac{\beta }{1 + \text{Bi}} > 0, \quad r_{\gamma} = \frac{\gamma}{1 + \text{Bi}} > 0.
\end{align}

\paragraph{Normalized Governing Equation.}
To account for individual wood sample variations, each thermal response or testbed temperature map is standardized by its own spatial mean and standard deviation, defined as $\tilde{T}^* = (\tilde{T} - \mu) / \sigma$.
This transformation is primarily motivated by the need to decouple intrinsic material properties from extrinsic experimental variables.
Substituting $\tilde{T}_{\text{mea}} = \sigma_{\text{mea}} \tilde{T}^*_{\text{mea}} + \mu_{\text{mea}}$ and $\tilde{T}_{\text{bed}} = \sigma_{\text{bed}} \tilde{T}^*_{\text{bed}} + \mu_{\text{bed}}$ into Equation~\ref{eq:pde_2d_simple} yields
\begin{align}
    \label{eq:norm_pde_2d_discrete1}
    \tilde{T}^*_{\text{mea}} = \underbrace{\left( \frac{ r_{\gamma} \sigma_{\text{bed}}}{\sigma_{\text{mea}}} \right)}_{k^*} \tilde{T}^*_{\text{bed}} + \underbrace{\left( \frac{r_{\gamma} \mu_{\text{bed}} - \mu_{\text{mea}}}{\sigma_{\text{mea}}} \right)}_{b^*} + \underbrace{r_{\alpha} L_{z}^{2} \frac{\partial^2 \tilde{T}^*_{\text{mea}}}{\partial x^2} + r_{\beta} L_{z}^{2} \frac{\partial^2 \tilde{T}^*_{\text{mea}}}{\partial y^2} }_{\text{Spatial Diffusion}}.
\end{align}

\section{Physics-Informed Neural Network Architecture}
\label{sec:architectures}
Since our datasets are small, we use a U-Net architecture~\cite{ronneberger2015u} as a backbone to regress thermal responses.
The U-net extracts multi-scale features with channel dimensions doubling at each downsampling step, i.e., $C_{5} = 2 C_4 = 4 C_3 = 8 C_2 = 16 C_1$.
To maintain computational efficiency, the base channel count $C_{1}$ is set to $16$.
Each block consists of a single convolution, followed by batch normalization (BN)~\cite{Ioffe2015batch} and ReLU activation~\cite{nair2010rectified}, i.e., [Conv $\to$ BN $\to$ ReLU].

\paragraph{Physics-Informed Convolutional Neural Networks (PICNN).} 
The physical parameters ($\hat{r}_{\alpha}$, $\hat{r}_{\beta}$, $k^{*}$, $b^{*}$) are learned exclusively from wood RGB images (Figure~\ref{fig:framework_overview}). 
Instead of concatenating a wood RGB image and a testbed temperature map, we split the final feature map $z_{\text{out}} \in \mathbb{R}^{C \times H \times W}$ output by the U-net into two parts.
One is for learning the physical parameters, and the other is for regressing the thermal response by incorporating the testbed temperature map, $T_{\text{bed}}$, via feature-wise linear modulation (FiLM)~\cite{perez2018film}. 
This ensures the Dirichlet boundary condition is integrated efficiently, linearly modulating the features as $\tilde{z}_{\text{out}} = z_{\text{out}} \odot a(T_{\text{bed}}) + b(T_{\text{bed}})$.

\paragraph{Physics-Integrated Convolutional Neural Networks (PInteCNN).} 
For the explicitly constrained neural network, the U-Net~\cite{ronneberger2015u} relies entirely on wood RGB images to learn the physical parameters $( \theta_{\alpha}, \theta_{\beta}, \theta_{\text{bed}}, \theta_{s})$, as illustrated in Figure~\ref{fig:framework_overview}. 
These physical parameters are then directly routed through the analytical approximator-predictor-corrector modules to compute the final thermal response predictions.

\section{Mathematical Proof of Bivariate Partitioning Unbiasedness}
\label{sec:proof_bivariate}

\begin{algorithm}[t]
\caption{KS-Optimized Bivariate Data Partitioning}
\label{alg:bivariate_split}
\begin{algorithmic}[1]
\Require Wood samples $\mathcal{S}$, total subsets $M$, split sizes $M_{\text{train}}, M_{\text{val}}, M_{\text{test}}$, and threshold $\tau$
\Ensure Partitioned sets $\mathcal{D}_{\text{train}}, \mathcal{D}_{\text{val}}$, and $\mathcal{D}_{\text{test}}$

\State \textbf{1. Metric Extraction and Filtering}
\State Initialize valid set $\mathcal{V} \leftarrow \emptyset$
\For{each sample $s_n \in \mathcal{S}$}
    \State Extract $T^*_{\text{mea}}$ and $T^*_{\text{bed}}$, and regress to find $k^*_A$ and $ k^*_B$
    \If{$k^*_A \ge \tau$ \textbf{and} $k^*_B \ge \tau$}
        \State Compute mean $\overline{k}^{*}_{n} \leftarrow \frac{1}{2}(k^*_A + k^*_B)$ and asymmetry $\Delta k^{*}_{n} \leftarrow |k^*_A - k^*_B|$
        \State $\mathcal{V} \leftarrow \mathcal{V} \cup \{ s_n \}$
    \EndIf
\EndFor

\State \textbf{2. Bivariate Sorting and Shifted Round-Robin Allocation}
\State Sort $\mathcal{V}$ descending by $\overline{k}^{*}$, and divide into $P$ contiguous batches of size $M$
\State Initialize $M$ empty subsets $\mathcal{D}_1, \dots, \mathcal{D}_M \leftarrow \emptyset$
\For{each batch index $p \in \{1, \dots, P\}$}
    \State Sort the $M$ samples in batch $p$ descending by $\Delta k^{*}$
    \For{each sample rank $j \in \{1, \dots, M\}$ in batch $p$}
        \State Assign sample to subset $\mathcal{D}_{m}$, where $m \leftarrow ((p + j - 2) \bmod M) + 1$
    \EndFor
\EndFor

\State \textbf{3. KS-Optimized Split Construction}
\State Initialize minimal distribution distance $D_{\min} \leftarrow \infty$
\For{each valid partition of $\mathcal{D}_1, \dots, \mathcal{D}_M$ into sets $C_{\text{train}}, C_{\text{val}}, C_{\text{test}}$}
    \State Compute two-sample KS statistics for $\overline{k}^*$ and $\Delta k^*$ across the candidate splits
    \State Aggregate KS statistics into a total distance score $D_{\text{KS}}$
    \If{$D_{\text{KS}} < D_{\min}$}
        \State $D_{\min} \leftarrow D_{\text{KS}}$
        \State $\mathcal{D}_{\text{train}}, \mathcal{D}_{\text{val}}, \mathcal{D}_{\text{test}} \leftarrow C_{\text{train}}, C_{\text{val}}, C_{\text{test}}$
    \EndIf
\EndFor

\State \Return $\mathcal{D}_{\text{train}}, \mathcal{D}_{\text{val}}, \mathcal{D}_{\text{test}}$
\end{algorithmic}
\end{algorithm}

To prove the unbiasedness of this algorithm, let the valid dataset $\mathcal{V}$ of size $L$ be divided into batches of size $M$. In the most generalized real-world scenario, $L$ is not perfectly divisible by $M$. Thus, $L = P \times M + R_1$, yielding $P$ complete batches and one final incomplete batch of size $R_1$ ($0 \le R_1 < M$). Furthermore, the $P$ complete batches form $C$ complete cycles of length $M$, plus a remainder of incomplete cycles $R_2$, such that $P = C \times M + R_2$ ($0 \le R_2 < M$). Algorithm~\ref{alg:bivariate_split} distributes these samples into $M$ disjoint subsets $\mathcal{D}_1, \dots, \mathcal{D}_M$. The subsets assigned samples from the $R_1$ remainder batch have a cardinality of $L_m = |\mathcal{D}_m| = P + 1$, while the rest have $L_m = |\mathcal{D}_m| = P$.

\begin{theorem}
\label{thm:primary_metric}
Algorithm~\ref{alg:bivariate_split} guarantees that the sample mean and sample variance of $\overline{k}^*$ for every subset $\mathcal{D}_m$ converge to the global dataset mean and global dataset variance, respectively, at a rate of $\mathcal{O}(1/P)$.
\end{theorem}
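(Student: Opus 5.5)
The plan is to exploit the sorted, one-per-batch structure of the allocation in Algorithm~\ref{alg:bivariate_split}. Relabel the valid samples so that $x_1 \ge x_2 \ge \dots \ge x_L$ are the sorted values of $\overline{k}^*$. Batch $p$ then consists of the contiguous block $B_p = \{x_{(p-1)M+1}, \dots, x_{pM}\}$.

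\textbf{Step 1: one sample per batch.} For fixed $p$, the map $j \mapsto ((p+j-2) \bmod M)+1$ is a bijection of $\{1,\dots,M\}$. The secondary sort by $\Delta k^*$ only permutes samples within a batch, so every subset $\mathcal{D}_m$ receives exactly one sample $x^{(m)}_p \in B_p$ from each complete batch. The remainder batch contributes at most one further sample.

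\textbf{Step 2: compare with batch means.} Let $\mu_p$ be the mean of $B_p$, and write $\Delta_p = x_{(p-1)M+1} - x_{pM}$ for the range of batch $p$. Then $|x^{(m)}_p - \mu_p| \le \Delta_p$. Because the batches are contiguous blocks of a monotone sequence, the ranges telescope: $\sum_{p=1}^{P} \Delta_p \le x_1 - x_L =: D$. For the subset mean this gives
\begin{equation*}
\Bigl|\frac{1}{L_m}\sum_{x\in\mathcal{D}_m} x - \frac{1}{P}\sum_{p=1}^P \mu_p\Bigr| \le \frac{D}{P} + \mathcal{O}\!\left(\frac{D}{P}\right).
\end{equation*}
The second term absorbs the effect of the single remainder element and the normalization by $L_m \in \{P, P+1\}$.

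\textbf{Step 3: pass to the global mean.} The average $\frac{1}{P}\sum_p \mu_p$ is the mean of the first $PM$ samples. It differs from the global mean $\bar{x}$ by at most $R_1 D/L = \mathcal{O}(D/P)$, since $R_1 < M$ and $L \ge PM$. Combining Steps 2 and 3 gives the $\mathcal{O}(1/P)$ rate for the mean, with constant $\mathcal{O}(D)$, where $D$ is the range of $\overline{k}^*$.

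\textbf{Step 4: the variance.} I would apply the same argument to the squares $y = x^2$. These are not sorted in the same order when $x$ changes sign. However, $\overline{k}^* \ge \tau > 0$ by the filtering step, so all samples are positive and $y$ is monotone in $x$. The range of the squares is then at most $2\max|x|\,D$, so the subset second moment is within $\mathcal{O}(1/P)$ of the global second moment. Writing the variance as second moment minus squared mean, the difference is bounded by $|a_m - a| + |\bar{x}_m + \bar{x}|\,|\bar{x}_m - \bar{x}|$, where $a_m, a$ are the subset and global second moments. This is again $\mathcal{O}(1/P)$.

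\textbf{Main obstacle.} The delicate part is the bookkeeping in Step 2. One must not treat samples as random; the argument is purely deterministic via telescoping of batch ranges. The remainder batches $R_1$ and the incomplete cycles $R_2$ must also be tracked carefully. I expect $R_2$ to be irrelevant, because the one-per-batch property holds regardless of cycle position. The claim "converge" should be read as this deterministic $\mathcal{O}(D/P)$ bound with $D$ fixed.
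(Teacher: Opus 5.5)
Your proposal is correct and takes essentially the paper's route. Both arguments use one representative per sorted batch and telescope the within-batch ranges, so the discrepancy is bounded by a constant independent of $P$; the same bound is then reapplied to a quadratic function of $\overline{k}^*$ for the variance. The differences are only bookkeeping: you compare each subset with the batch means rather than pairwise with other subsets, and you use the raw second moment (monotone since $\overline{k}^* \ge \tau > 0$) where the paper uses the shift identity about $\mu$ with a Lipschitz constant $Q$.
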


\begin{proof}
Because $\mathcal{V}$ is sorted descending by $\overline{k}^*$ before batching, the local extrema of adjacent batches naturally overlap, satisfying $\overline{k}^*_{p, \text{min}} \ge \overline{k}^*_{p+1, \text{max}}$. Thus, for all internal batch transitions $p \in \{1, \dots, P-1\}$, the difference is non-positive ($\overline{k}^*_{p+1, \text{max}} - \overline{k}^*_{p, \text{min}} \le 0$).

To prove mean convergence, we evaluate the absolute deviation of the subset mean $\mu_m$ from the global mean $\mu$. Because $\mu$ is the weighted average of all subset means, this deviation is strictly bounded by the maximum absolute difference between any two subsets ($|\mu_m - \mu| \le \max |\mu_m - \mu_n|$). To evaluate this inter-subset difference, let $S_m$ and $S_n$ denote their respective subset sums. For a dataset geometry $L = P \times M + R_1$, the maximum cardinality mismatch is $L_m = P+1$ and $L_n = P$. Expanding the mean difference yields
\begin{equation}
|\mu_m - \mu_n| = \left| \frac{S_m}{P+1} - \frac{S_n}{P} \right| \le \frac{|S_m - S_n|}{P+1} + \frac{\mu_n}{P+1}.
\end{equation}
To bound the sum disparity $|S_m - S_n|$, we segment it into the $P$ complete batches and the potential $R_1$ remainder batch:
\begin{align}
|S_m - S_n| &\le \sum_{p=1}^P \left( \overline{k}^*_{p, \text{max}} - \overline{k}^*_{p, \text{min}} \right) + \overline{k}^*_{P+1, \text{max}} \notag \\
&= \overline{k}^*_{1, \text{max}} + \sum_{p=1}^{P-1} \left( \overline{k}^*_{p+1, \text{max}} - \overline{k}^*_{p, \text{min}} \right) - \overline{k}^*_{P, \text{min}} + \overline{k}^*_{P+1, \text{max}}.
\end{align}
Because every internal telescoping term is $\le 0$, the maximum sum difference isolates strictly to the global boundaries. Let this finite constant be denoted as $\Delta S_{\max} = \overline{k}^*_{1, \text{max}} - \overline{k}^*_{P, \text{min}} + \overline{k}^*_{P+1, \text{max}}$. Substituting this back into the mean deviation and applying $\frac{1}{P+1} < \frac{1}{P}$, we write
\begin{equation}
\max |\mu_m - \mu_n| \le \frac{\Delta S_{\max} + \overline{k}^*_{\max}}{P}.
\end{equation}
Because the bounding numerator is a finite constant completely independent of $P$, we define it as $C_{\text{mean}}$. Thus, we establish $|\mu_m - \mu| \le \frac{C_{\text{mean}}}{P}$, directly satisfying the formal definition of deterministic convergence at a rate of $\mathcal{O}(1/P)$.

To prove variance convergence, we evaluate the absolute deviation $|\sigma_m^2 - \sigma^2|$. We define the subset variance using the variance shift identity relative to the global mean $\mu$:
\begin{equation}
\sigma_m^2 = \left[ \frac{1}{L_m} \sum_{p=1}^{L_m} (\overline{k}^*_{p,m} - \mu)^2 \right] - (\mu_m - \mu)^2.
\end{equation}
By subtracting the exact global variance $\sigma^2$ from both sides and applying the absolute value, we define the explicit equation for the variance deviation:
\begin{equation}
\left| \sigma_m^2 - \sigma^2 \right| = \left| \underbrace{\left[ \frac{1}{L_m} \sum_{p=1}^{L_m} (\overline{k}^*_{p,m} - \mu)^2 \right]}_{\text{Pseudo-Variance}} - \sigma^2 - (\mu_m - \mu)^2 \right|.
\end{equation}
Applying the triangle inequality separates the pseudo-variance error from the mean-shift penalty:
\begin{equation}
\left| \sigma_m^2 - \sigma^2 \right| \le \left| \frac{1}{L_m} \sum_{p=1}^{L_m} (\overline{k}^*_{p,m} - \mu)^2 - \sigma^2 \right| + (\mu_m - \mu)^2.
\end{equation}
The dataset's maximum absolute gradient is strictly bounded by the Lipschitz constant $Q = 2 \max(\overline{k}^*_{\max} - \mu, \mu - \overline{k}^*_{\min})$. Utilizing $Q$, the maximum difference in the sum of squared deviations is explicitly bounded by $Q \cdot \Delta S_{\max}$. Thus, the pseudo-variance error isolates to $\frac{Q \cdot \Delta S_{\max}}{L_m}$, yielding
\begin{equation}
\left| \sigma_m^2 - \sigma^2 \right| \le \frac{Q \cdot \Delta S_{\max}}{L_m} + (\mu_m - \mu)^2.
\end{equation}
Noting that $\frac{1}{L_m} \le \frac{1}{P}$ and substituting the previously established mean bound $C_{\text{mean}}$ yields
\begin{equation}
\left| \sigma_m^2 - \sigma^2 \right| \le \frac{Q \cdot \Delta S_{\max}}{P} + \left( \frac{C_{\text{mean}}}{P} \right)^2.
\end{equation}
Because $Q$, $\Delta S_{\max}$, and $C_{\text{mean}}$ are finite constants, the $\mathcal{O}(1/P^2)$ penalty term rapidly decays. The dominant linear bound establishes $|\sigma_m^2 - \sigma^2| \le \frac{C_{\text{var}}}{P}$. Consequently, the subset variance uniformly reconstructs the global dataset variance, mathematically guaranteeing convergence at a strict rate of $\mathcal{O}(1/P)$.
\end{proof}

\begin{theorem}
\label{thm:secondary_metric}
Algorithm~\ref{alg:bivariate_split} guarantees that the sample mean and sample variance of $\Delta k^*$ for every subset $\mathcal{D}_m$ converge to the global dataset mean and global dataset variance, respectively, at a rate of $\mathcal{O}(1/P)$.
\end{theorem}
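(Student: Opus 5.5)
Unlike the proof of Theorem~\ref{thm:primary_metric}, the telescoping argument cannot be reused directly: $\mathcal{V}$ is sorted by $\overline{k}^*$, not by $\Delta k^*$, so the inequality $\Delta k^*_{p,\min} \ge \Delta k^*_{p+1,\max}$ generally fails. The structure I would exploit instead is the \emph{rank balance} of the shifted round-robin. Within batch $p$, let $\Delta_{p,(1)} \ge \dots \ge \Delta_{p,(M)}$ be the order statistics of $\Delta k^*$. Subset $\mathcal{D}_m$ receives rank $j(p,m)$, where $j(p,m) \equiv m - p + 1 \pmod M$. Hence over any $M$ consecutive batches $p = cM+1,\dots,(c+1)M$, each subset receives every rank $j\in\{1,\dots,M\}$ exactly once. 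I would write $P = C M + R_2$ as set up before Theorem~\ref{thm:primary_metric}, and treat the $C$ complete cycles, the $R_2$ leftover batches, and the $R_1$ remainder batch separately.

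\textbf{Step 1 (sum disparity).} For subsets $m,n$, the contribution of cycle $c$ to $S_m - S_n$ is $\sum_{j=1}^M (\Delta_{p_m(c,j),(j)} - \Delta_{p_n(c,j),(j)})$, where both batch indices lie in the same window of length $M$. Each summand is bounded by the oscillation of the rank-$j$ order statistic within that window, so the cycle contribution is at most $\sum_{j}\sum_{p\in\text{cycle }c} |\Delta_{p+1,(j)} - \Delta_{p,(j)}|$. Summing over cycles, the complete-cycle part of $|S_m-S_n|$ is at most $V := \sum_j \sum_{p=1}^{P-1} |\Delta_{p+1,(j)} - \Delta_{p,(j)}|$. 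The $R_2 < M$ leftover batches and the remainder batch contribute at most $(R_2+1)\,\Delta k^*_{\max} \le M\,\Delta k^*_{\max}$. So $|S_m - S_n| \le \Delta S'_{\max} := V + M\Delta k^*_{\max}$, the analogue of $\Delta S_{\max}$.

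\textbf{Step 2 (mean and variance).} With $\Delta S'_{\max}$ in hand, I would copy the proof of Theorem~\ref{thm:primary_metric} verbatim. The cardinality mismatch $L_m\in\{P,P+1\}$ gives $|\mu_m-\mu|\le \max|\mu_m-\mu_n| \le (\Delta S'_{\max}+\Delta k^*_{\max})/P$. Next, apply the variance shift identity with the Lipschitz constant $Q' = 2\max(\Delta k^*_{\max}-\mu,\mu-\Delta k^*_{\min})$ of $x\mapsto (x-\mu)^2$ on the data range. Because $(x-\mu)^2$ is Lipschitz there, the variation $V$ of squared deviations is at most $Q' V$. The same cycle argument then bounds the pseudo-variance error by $Q'\Delta S'_{\max}/L_m$, giving $|\sigma_m^2-\sigma^2| \le C'_{\text{var}}/P$.

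\textbf{Main obstacle.} The obstacle is Step 1: showing that $V$ is bounded independently of $P$. Boundedness of $\Delta k^*$ alone only gives $V = \mathcal{O}(PM)$, which is useless. The route I would try first is to note that $\overline{k}^*$ and $\Delta k^*$ are bounded physical quantities and to impose, as the theorem implicitly requires, that the within-batch order statistics of $\Delta k^*$ have bounded total variation along the $\overline{k}^*$-sorted sequence. This is equivalent to $\Delta k^*$ depending regularly (e.g.\ monotonically or piecewise-monotonically) on $\overline{k}^*$ at the batch scale. If that assumption is unavailable, the fallback is weaker: the sum disparity over full cycles still cancels in expectation when batches are exchangeable in $\Delta k^*$. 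That yields only a probabilistic $\mathcal{O}(1/\sqrt{P})$ rate, and I would flag this as the point where the deterministic $\mathcal{O}(1/P)$ claim needs the extra hypothesis.
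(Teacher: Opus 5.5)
Your argument is correct as a conditional proof, and its skeleton matches the paper's. Both rest on the fact that over each block of $M$ consecutive batches the rank $j(p,m)=((m-p)\bmod M)+1$ runs through $\{1,\dots,M\}$ exactly once. Hence only the $R_2$ leftover batches and the $R_1$ remainder batch are unbalanced, and dividing by $L_m\ge P$ gives $\mathcal{O}(1/P)$.

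You differ in how you handle the within-cycle mismatch and the variance. The paper linearizes each batch, $\Delta k^*_{p,m}\approx\mu_p+\delta_p\bigl(\tfrac{M+1}{2}-j(p,m)\bigr)$, and assumes $\delta_p\approx\delta_c$ inside a cycle, so complete cycles cancel exactly. It then expands the variance into macro, micro and cross-covariance terms under that same model. You instead keep the mismatch and bound it by the within-cycle oscillation of the rank-$j$ order statistics, which sums to $V$, and you reuse the pseudo-variance/Lipschitz argument of Theorem~\ref{thm:primary_metric} for the variance. The paper's route is shorter and gives the explicit constant $(R_2+1)\delta_{\max}\tfrac{M-1}{2}$, but its ``$\approx$'' steps conceal exactly the hypothesis you isolate. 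Your route is rigorous once $V=\mathcal{O}(1)$.

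You are right that some such hypothesis cannot be dropped. Take $M=2$, with odd batches carrying $\Delta k^*$ values $\{1,0\}$ and even batches $\{0,0\}$. Then $\mathcal{D}_1$ receives the $1$ from every odd batch, so its mean stays near $1/2$ while that of $\mathcal{D}_2$ is $0$, for all $P$. This example violates both the paper's stationarity assumption and your bounded-$V$ assumption.

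One refinement is worth making. Every subset takes exactly one element from each complete batch, so the batch means $\mu_p$ cancel exactly in $S_m-S_n$. You can therefore run your oscillation bound on the centered order statistics $\Delta_{p,(j)}-\mu_p$ instead. What you then need is only that the sum over cycles of their within-cycle oscillations be $\mathcal{O}(1)$. That quantity is at most $2V$, so your hypothesis implies it, and it vanishes identically under the paper's model, so the centered condition covers both arguments at once. Relatedly, monotone or piecewise-monotone dependence of $\Delta k^*$ on $\overline{k}^*$ is sufficient for bounded $V$, not equivalent to it.
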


\begin{proof}
Within any batch $p$, the local distribution is linearly approximated by gradient $\delta_p$ and mean $\mu_p$. By inverting the cyclical assignment rule, the exact rank assigned to subset $m$ from batch $p$ is $j(p, m) = ((m - p) \bmod M) + 1$. The assigned sample is thus linearly approximated as $\Delta k^*_{p,m} \approx \mu_p + \delta_p \left( \frac{M+1}{2} - j(p, m) \right)$. 

To prove mean convergence, we evaluate the absolute deviation of the subset mean $\mu_m$ from the global mean $\mu$. By substituting the linear approximation, the deviation is driven entirely by the cyclical rank bias over the subset's $L_m$ elements:
\begin{equation}
|\mu_m - \mu| = \frac{1}{L_m} \left| \sum_{p=1}^{L_m} \delta_p \left( \frac{M+1}{2} - j(p, m) \right) \right|.
\end{equation}
The dataset is geometrically structured as $P$ complete batches and a partial $R_1$ batch ($L = P \times M + R_1$), where the $P$ batches form $C$ complete cycles and $R_2$ remainder batches ($P = C \times M + R_2$). Segmenting the sum explicitly isolates the complete cycles and the remainder batch residuals ($\varepsilon_{R_2}$ and $\varepsilon_{R_1}$):
\begin{align} 
|\mu_m - \mu| \approx \frac{1}{L_m} \Bigg| &\underbrace{\sum_{c=1}^C \delta_c \left[ \sum_{p=(c-1)M+1}^{cM} \left( \frac{M+1}{2} - j(p, m) \right) \right]}_{\text{Complete Cycles}} \notag\\
&+ \underbrace{\sum_{p=CM+1}^P \delta_p \left( \frac{M+1}{2} - j(p, m) \right)}_{\varepsilon_{R_2}} + \underbrace{\sum_{p=P+1}^{L_m} \delta_p \left( \frac{M+1}{2} - j(p, m) \right)}_{\varepsilon_{R_1}} \Bigg|. 
\end{align}
Assuming local gradient stationarity ($\delta_p \approx \delta_c$) within a cycle, the assigned ranks $j(p,m)$ perfectly permute $\{1, \dots, M\}$. Thus, the inner sum for the complete cycles strictly evaluates to zero ($\frac{M(M+1)}{2} - \frac{M(M+1)}{2} = 0$). 

The absolute deviation isolates purely to the residual errors from the incomplete remainder batches ($R_2 + 1$ maximum batches). Because the maximum rank deviation is $\frac{M-1}{2}$, we can bound the total residual using the global maximum gradient $\delta_{\max}$. Noting that subset cardinality $L_m \in \{P, P+1\}$ implies $\frac{1}{L_m} \le \frac{1}{P}$, we write
\begin{equation}
|\mu_m - \mu| \le \frac{|\varepsilon_{R_2} + \varepsilon_{R_1}|}{L_m} \le \frac{(R_2 + 1) \cdot \delta_{\max} \left( \frac{M-1}{2} \right)}{P}.
\end{equation}
Because the numerator is a finite constant completely independent of $P$, we establish $|\mu_m - \mu| \le \frac{C_{\text{mean}}}{P}$, proving deterministic convergence at a rate of $\mathcal{O}(1/P)$. 

To prove variance convergence, we similarly evaluate the absolute deviation of the subset variance $\sigma_m^2$ from the global variance $\sigma^2$. The subset variance is $\sigma_m^2 = \frac{1}{L_m} \sum_{p=1}^{L_m} (\Delta k^*_{p,m} - \mu_m)^2$. Substituting the linear approximation expands this into three components:
\begin{align}
\sigma_m^2 \approx \underbrace{\frac{1}{L_m} \sum_{p=1}^{L_m} (\mu_p - \mu_m)^2}_{\text{Macro Variance}} &+ \underbrace{\frac{1}{L_m} \sum_{p=1}^{L_m} \delta_p^2 \left( \frac{M+1}{2} - j(p, m) \right)^2}_{\text{Micro Variance}} \notag\\
&+ \underbrace{\frac{2}{L_m} \sum_{p=1}^{L_m} (\mu_p - \mu_m) \delta_p \left( \frac{M+1}{2} - j(p, m) \right)}_{\text{Cross-Covariance}}.
\end{align}
For the exact global variance $\sigma^2$, cross-covariance is identically zero. For the subset variance, the cross-covariance evaluated over the $C$ complete cycles also collapses to zero due to the perfect rank permutation, and the micro variance evaluates to an identical constant for all $m$. 

Segmenting the summation into the complete cycles and the remainder batches explicitly isolates the residual variance $\varepsilon_{\text{var}}$:
\begin{align}
\sigma_m^2 \approx \sigma^2 + \frac{1}{L_m} \Bigg[ &\underbrace{\sum_{p=CM+1}^{L_m} (\mu_p - \mu_m)^2}_{\varepsilon_{\text{macro}}} + \underbrace{\sum_{p=CM+1}^{L_m} \delta_p^2 \left( \frac{M+1}{2} - j(p, m) \right)^2}_{\varepsilon_{\text{micro}}} \notag\\
&+ \underbrace{\sum_{p=CM+1}^{L_m} 2(\mu_p - \mu_m) \delta_p \left( \frac{M+1}{2} - j(p, m) \right)}_{\varepsilon_{\text{cross}}} \Bigg].
\end{align}
Thus, the absolute difference $|\sigma_m^2 - \sigma^2|$ eliminates the cycle components and isolates strictly to the combined residual error $\varepsilon_{\text{var}} = \varepsilon_{\text{macro}} + \varepsilon_{\text{micro}} + \varepsilon_{\text{cross}}$. Let $\Delta_{\max}^2$ denote the global maximum absolute squared deviation from the mean within the dataset. Applying the remainder batch bound ($R_2 + 1$ maximum batches) and the $\frac{1}{L_m} \le \frac{1}{P}$ inequality yields
\begin{equation}
|\sigma^2_m - \sigma^2| = \frac{|\varepsilon_{\text{var}}|}{L_m} \le \frac{(R_2 + 1) \cdot \Delta_{\max}^2}{P}.
\end{equation}
Because the bounding numerator is a finite constant, we establish $|\sigma^2_m - \sigma^2| \le \frac{C_{\text{var}}}{P}$. Consequently, the subset variance uniformly reconstructs the Law of Total Variance, guaranteeing that the maximum variance deviation decays at a strict rate of $\mathcal{O}(1/P)$.
\end{proof}

\begin{figure}[t]
    \centering
    \includegraphics[width=1\textwidth]{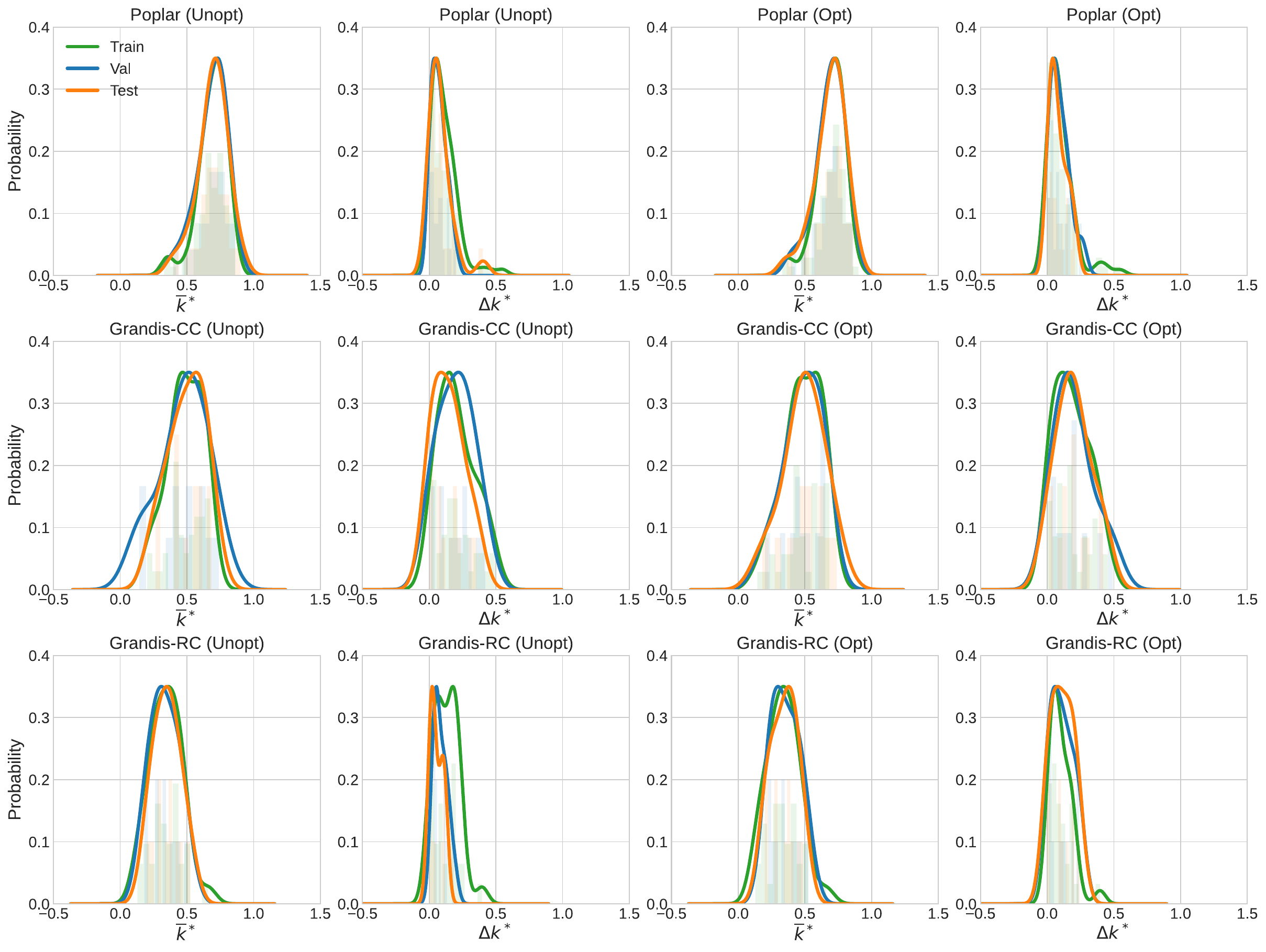}
    \caption{
    Probability distributions of physical metrics ($\overline{k}^*$ and $\Delta k^*$) for training, validation, and test splits across the Poplar (top), Grandis-CC (middle), and Grandis-RC (bottom) datasets.
    While the unoptimized baseline (left two columns) exhibits noticeable shifts in distribution and peak misalignments between subsets, our KS-optimized partitioning (right two columns) aligns the profiles across all splits, effectively mitigating domain shift.
    }
    \label{fig:coefficient_split}
\end{figure}

\section{More Results on Ablation Studies}
\label{sec:ablation}
This section conducts ablation studies on the weighting coefficient $\lambda_{\text{pde}}$ and corrector steps (Table~\ref{tab:model_evaluation}) and learned physical parameters (Table~\ref{tab:merged_picnn_pintecnn}).
\begin{table}[htbp]
  \caption{
  Ablation study of the three thermal prediction approaches. 
  The data-driven approach, PICNN, and PInteCNN are evaluated across the three datasets (Poplar, Grandis-CC, and Grandis-RC). 
  Performance is assessed under varying model configurations (PDE penalty weights $\lambda_{\text{pde}}$ and corrector steps) using MAE, RMSE, and $\delta_{01}$. 
  Bold values denote the best overall configuration for each dataset.
  }
  \label{tab:model_evaluation}
  \centering
  \resizebox{\textwidth}{!}{ 
  \begin{tabular}{ll ccc ccc ccc}
    \toprule
    \multirow{2}{*}{Method} & \multirow{2}{*}{Configuration} & \multicolumn{3}{c}{\textbf{Poplar}} & \multicolumn{3}{c}{\textbf{Grandis-CC}} & \multicolumn{3}{c}{\textbf{Grandis-RC}} \\
    \cmidrule(lr){3-5} \cmidrule(lr){6-8} \cmidrule(lr){9-11}
    & & MAE $\downarrow$ & RMSE $\downarrow$ & $\delta_{01}$ (\%) $\uparrow$ & MAE $\downarrow$ & RMSE $\downarrow$ & $\delta_{01}$ (\%) $\uparrow$ & MAE $\downarrow$ & RMSE $\downarrow$ & $\delta_{01}$ (\%) $\uparrow$ \\
    \midrule
    
    Data-driven & -- & 0.3116 & 0.4023 & 75.64 & 0.2051 & 0.2631 & 92.05 & 0.3578 & 0.4526 & 69.00 \\
    
    \midrule
    \multirow{7}{*}{PICNN} 
    & $\lambda_{\text{pde}} = 10^{0}$  & 0.3223 & 0.4144 & 74.15 & 0.2263 & 0.2941 & 88.89 & 0.3861 & 0.4901 & 65.99 \\
    & $\lambda_{\text{pde}} = 10^{-1}$ & 0.3150 & 0.4049 & 75.17 & \textbf{0.2043} & \textbf{0.2619} & \textbf{92.13} & 0.3503 & 0.4427 & 70.35 \\
    & $\lambda_{\text{pde}} = 10^{-2}$ & 0.3140 & 0.4037 & 75.42 & 0.2088 & 0.2673 & 91.63 & 0.3535 & 0.4466 & 70.22 \\
    & $\lambda_{\text{pde}} = 10^{-3}$ & 0.3090 & 0.4000 & \textbf{76.06} & 0.2051 & 0.2654 & 91.97 & 0.3536 & 0.4462 & 70.06 \\
    & $\lambda_{\text{pde}} = 10^{-4}$ & 0.3121 & 0.4035 & 75.79 & 0.2076 & 0.2683 & 91.66 & 0.3468 & 0.4409 & 71.48 \\
    & $\lambda_{\text{pde}} = 10^{-5}$ & \textbf{0.3088} & \textbf{0.3974} & 76.05 & 0.2065 & 0.2668 & 91.68 & 0.3484 & 0.4416 & 71.13 \\
    & $\lambda_{\text{pde}} = 10^{-6}$ & 0.3125 & 0.4031 & 75.59 & 0.2067 & 0.2674 & 91.65 & \textbf{0.3456} & \textbf{0.4386} & \textbf{71.57} \\
    
    \midrule
    \multirow{3}{*}{PInteCNN} 
    & Steps $= 0$ & 0.3137 & 0.4056 & 75.34 & 0.2163 & 0.2781 & 90.32 & 0.3758 & 0.4772 & 67.38 \\
    & Steps $= 1$ & 0.3096 & 0.3992 & 75.82 & 0.2102 & 0.2694 & 91.35 & 0.3497 & 0.4387 & 70.17 \\
    & Steps $= 2$ & 0.3132 & 0.4026 & 75.38 & 0.2824 & 0.3629 & 81.98 & 0.3620 & 0.4537 & 68.27 \\
    
    \bottomrule
  \end{tabular}
  } 
\end{table}

\begin{table}[htbp]
  \caption{
  Learned physical parameters for physics-informed learning. 
  Part I reports the converged values for ($\hat{r}_{\alpha}$, $\hat{r}_{\beta}$, $k^{*}$, $b^{*}$) across the training, validation, and test sets for the PICNN under varying $\lambda_{\text{pde}}$ weights. 
  Part II details the spatially physical parameters ($\theta_{\alpha}$, $\theta_{\beta}$, $\theta_{\text{bed}}$, $\theta_{\text{s}}$) for the PInteCNN across different corrector step iterations.
  All the physical parameters are computed as spatial averages.
  Bold formatting highlights the parameter rows corresponding to the configuration that achieves the lowest MAE for each dataset.
  }
  \label{tab:merged_picnn_pintecnn}
  \centering
  
  \resizebox{\textwidth}{!}{%
  \begin{tabular}{clcccccccccccc}
    \toprule
    \multicolumn{14}{c}{\textbf{Part I: PICNN Learned Physical Parameters}} \\
    \midrule
    \multirow{2}{*}{\textbf{Dataset}} & \multirow{2}{*}{\textbf{$\lambda_{\text{pde}}$}} & \multicolumn{3}{c}{\textbf{$\hat{r}_{\alpha}$}} & \multicolumn{3}{c}{\textbf{$\hat{r}_{\beta}$}} & \multicolumn{3}{c}{\textbf{$k^{*}$}} & \multicolumn{3}{c}{\textbf{$b^{*}$}} \\
    \cmidrule(lr){3-5} \cmidrule(lr){6-8} \cmidrule(lr){9-11} \cmidrule(lr){12-14}
    & & Train & Val & Test & Train & Val & Test & Train & Val & Test & Train & Val & Test \\
    \midrule
    \multirow{7}{*}{\textbf{Poplar}} 
    & $10^{0}$       & 0.3772 & 0.3798 & 0.3869 & 0.5843 & 0.5964 & 0.6024 & 0.5472 & 0.5553 & 0.5555 & 0.0076 & 0.0073 & 0.0082 \\
    & $10^{-1}$       & 0.1516 & 0.1568 & 0.1633 & 0.3296 & 0.3755 & 0.3813 & 0.6422 & 0.6469 & 0.6441 & 0.0505 & 0.0154 & 0.0300 \\
    & $10^{-2}$      & 0.0947 & 0.1131 & 0.1175 & 0.1102 & 0.1332 & 0.1391 & 0.6241 & 0.6445 & 0.6457 & 0.0451 & 0.0249 & 0.0198 \\
    & $10^{-3}$    & 0.0673 & 0.0753 & 0.0771 & 0.0727 & 0.0815 & 0.0837 & 0.6536 & 0.6215 & 0.6196 & 0.0539 & 0.0813 & 0.0985 \\
    & $10^{-4}$    & 0.0854 & 0.0957 & 0.1013 & 0.0862 & 0.0965 & 0.1022 & 0.6228 & 0.6549 & 0.6516 & 0.0548 & 0.0589 & 0.0765 \\
    & $10^{-5}$ & \textbf{0.1240} & \textbf{0.1400} & \textbf{0.1454} & \textbf{0.1389} & \textbf{0.1548} & \textbf{0.1605} & \textbf{0.7014} & \textbf{0.6419} & \textbf{0.6431} & \textbf{0.0711} & \textbf{0.0603} & \textbf{0.0575} \\
    & $10^{-6}$ & 0.2679 & 0.2764 & 0.2865 & 0.3093 & 0.3169 & 0.3265 & 0.6596 & 0.6339 & 0.6347 & 0.0580 & 0.0795 & 0.0849 \\
    \midrule
    \multirow{7}{*}{\textbf{Grandis-CC}} 
    & $10^{0}$       & 0.0297 & 0.0303 & 0.0304 & 0.0183 & 0.0191 & 0.0190 & 0.0373 & 0.0390 & 0.0386 & 0.0085 & 0.0088 & 0.0317 \\
    & $10^{-1}$       & \textbf{0.0947} & \textbf{0.1020} & \textbf{0.0951} & \textbf{0.0328} & \textbf{0.0363} & \textbf{0.0325} & \textbf{0.4083} & \textbf{0.4101} & \textbf{0.4104} & \textbf{0.0019} & \textbf{0.0191} & \textbf{0.0318} \\
    & $10^{-2}$      & 0.4583 & 0.4726 & 0.4684 & 0.0433 & 0.0494 & 0.0481 & 0.4249 & 0.4251 & 0.4262 & 0.0072 & 0.0534 & 0.0839 \\
    & $10^{-3}$    & 0.2469 & 0.2511 & 0.2478 & 0.0289 & 0.0333 & 0.0310 & 0.4325 & 0.4325 & 0.4322 & 0.0025 & 0.0109 & 0.0436 \\
    & $10^{-4}$    & 0.1459 & 0.1568 & 0.1495 & 0.0356 & 0.0416 & 0.0364 & 0.4294 & 0.4309 & 0.4299 & 0.0019 & -0.0007 & 0.0401 \\
    & $10^{-5}$ & 0.1548 & 0.1697 & 0.1633 & 0.0742 & 0.0838 & 0.0780 & 0.4345 & 0.4389 & 0.4370 & 0.0092 & -0.0114 & 0.0138 \\
    & $10^{-6}$ & 0.3353 & 0.3497 & 0.3437 & 0.1724 & 0.1825 & 0.1737 & 0.4262 & 0.4285 & 0.4279 & 0.0032 & 0.0104 & 0.0370 \\
    \midrule
    \multirow{7}{*}{\textbf{Grandis-RC}} 
    & $10^{0}$       & 0.0865 & 0.0951 & 0.0989 & 0.0193 & 0.0227 & 0.0231 & 0.0352 & 0.0407 & 0.0415 & 0.0139 & -0.0302 & -0.0098 \\
    & $10^{-1}$       & 0.1685 & 0.1979 & 0.1968 & 0.0318 & 0.0412 & 0.0414 & 0.2593 & 0.2879 & 0.2882 & 0.0229 & -0.0058 & 0.0049 \\
    & $10^{-2}$      & 0.0340 & 0.0457 & 0.0454 & 0.0251 & 0.0340 & 0.0331 & 0.2560 & 0.3202 & 0.3186 & -0.0024 & 0.0100 & 0.0077 \\
    & $10^{-3}$     & 0.0294 & 0.0389 & 0.0388 & 0.0219 & 0.0275 & 0.0272 & 0.3174 & 0.3388 & 0.3423 & 0.0145 & -0.0166 & 0.0025 \\
    & $10^{-4}$    & 0.0382 & 0.0504 & 0.0490 & 0.0301 & 0.0375 & 0.0351 & 0.3263 & 0.3388 & 0.3407 & 0.0170 & -0.0232 & -0.0057 \\
    & $10^{-5}$ & 0.0760 & 0.0972 & 0.0963 & 0.0713 & 0.0825 & 0.0803 & 0.3414 & 0.3597 & 0.3611 & -0.0006 & 0.0028 & 0.0045 \\
    & $10^{-6}$ & \textbf{0.1595} & \textbf{0.1732} & \textbf{0.1742} & \textbf{0.2079} & \textbf{0.2194} & \textbf{0.2196} & \textbf{0.3377} & \textbf{0.3617} & \textbf{0.3644} & \textbf{0.0249} & \textbf{-0.0587} & \textbf{-0.0568} \\
    \bottomrule
  \end{tabular}%
  }
  
  \vspace{0.4cm}
  
  \resizebox{\textwidth}{!}{%
  \begin{tabular}{clcccccccccccc}
    \toprule
    \multicolumn{14}{c}{\textbf{Part II: PInteCNN Learned Physical Parameters}} \\
    \midrule
    \multirow{2}{*}{\textbf{Dataset}} & \multirow{2}{*}{\textbf{Steps}} & \multicolumn{3}{c}{\textbf{$\theta_{\alpha}$}} & \multicolumn{3}{c}{\textbf{$\theta_{\beta}$}} & \multicolumn{3}{c}{\textbf{$\theta_{\text{bed}}$}} & \multicolumn{3}{c}{\textbf{$\theta_{\text{s}}$}} \\
    \cmidrule(lr){3-5} \cmidrule(lr){6-8} \cmidrule(lr){9-11} \cmidrule(lr){12-14}
    & & Train & Val & Test & Train & Val & Test & Train & Val & Test & Train & Val & Test \\
    \midrule
    \multirow{3}{*}{\textbf{Poplar}} 
    & $0$ & 0.4332 & 0.4320 & 0.4322 & 0.3682 & 0.3699 & 0.3700 & 0.6078 & 0.6019 & 0.5973 & 0.0491 & 0.0612 & 0.0856 \\
    & $1$ & \textbf{0.3342} & \textbf{0.3381} & \textbf{0.3430} & \textbf{0.2336} & \textbf{0.2386} & \textbf{0.2432} & \textbf{0.6243} & \textbf{0.6216} & \textbf{0.6193} & \textbf{0.0336} & \textbf{0.0413} & \textbf{0.0478} \\
    & $2$ & 0.2656 & 0.2612 & 0.2640 & 0.1839 & 0.1784 & 0.1806 & 0.6509 & 0.6507 & 0.6492 & 0.0492 & 0.0421 & 0.0423 \\
    \midrule
    \multirow{3}{*}{\textbf{Grandis-CC}} 
    & $0$ & 0.4412 & 0.4442 & 0.4407 & 0.3601 & 0.3675 & 0.3649 & 0.4768 & 0.4849 & 0.4847 & 0.0137 & 0.0041 & 0.0362 \\
    & $1$ & \textbf{0.2575} & \textbf{0.2593} & \textbf{0.2604} & \textbf{0.1759} & \textbf{0.1799} & \textbf{0.1790} & \textbf{0.4516} & \textbf{0.4529} & \textbf{0.4525} & \textbf{0.0073} & \textbf{-0.0049} & \textbf{0.0316} \\
    & $2$ & 0.2860 & 0.2878 & 0.2885 & 0.1882 & 0.1907 & 0.1914 & 0.5586 & 0.5607 & 0.5585 & 0.0547 & 0.0670 & 0.0496 \\
    \midrule
    \multirow{3}{*}{\textbf{Grandis-RC}} 
    & $0$ & 0.3322 & 0.3401 & 0.3429 & 0.2651 & 0.2727 & 0.2753 & 0.3767 & 0.3843 & 0.3875 & 0.0410 & 0.0628 & 0.0499 \\
    & $1$ & \textbf{0.2347} & \textbf{0.2479} & \textbf{0.2498} & \textbf{0.1940} & \textbf{0.2016} & \textbf{0.2035} & \textbf{0.3797} & \textbf{0.3892} & \textbf{0.3907} & \textbf{0.0311} & \textbf{0.0332} & \textbf{0.0515} \\
    & $2$ & 0.2519 & 0.2541 & 0.2526 & 0.1709 & 0.1742 & 0.1727 & 0.3689 & 0.3690 & 0.3693 & 0.0175 & 0.0087 & 0.0342 \\
    \bottomrule
  \end{tabular}%
  }
\end{table}

\section{Sequential Data Integration Analysis}
\label{sec:data_integration}
\begin{table}[t]
\centering
\caption{
    Ablation study of the three thermal prediction approaches across the Poplar, Grandis-CC, and Grandis-RC datasets. 
    We compare the MAEs of Data-Driven, PICNN, and PInteCNN across sequential data integration steps. 
    Results include Best-First, Worst-First, Random, and Average (Main Path) strategies.
}
\label{tab:full_ablation_master}
\resizebox{\textwidth}{!}{%
\begin{tabular}{l cccc cccc cccc}
\toprule
& \multicolumn{4}{c}{\textbf{Data-Driven}} & \multicolumn{4}{c}{\textbf{PICNN}} & \multicolumn{4}{c}{\textbf{PInteCNN}} \\
\cmidrule(lr){2-5} \cmidrule(lr){6-9} \cmidrule(lr){10-13}
\textbf{Training Sequence} & \textbf{Best} & \textbf{Worst} & \textbf{Rand} & \textbf{Avg} & \textbf{Best} & \textbf{Worst} & \textbf{Rand} & \textbf{Avg} & \textbf{Best} & \textbf{Worst} & \textbf{Rand} & \textbf{Avg} \\
\midrule
\multicolumn{13}{c}{\textbf{Poplar}} \\
\midrule
Step 1 ($\mathcal{D}_1$) & 0.3405 & 0.3313 & 0.3411 & 0.3390 & 0.3389 & 0.3249 & 0.3345 & 0.3335 & 0.3363 & 0.3388 & 0.3388 & 0.3383 \\
Step 2 ($\mathcal{D}_{1..2}$) & 0.3177 & 0.3279 & 0.3227 & 0.3228 & 0.3187 & 0.3306 & 0.3217 & 0.3229 & 0.3229 & 0.3272 & 0.3249 & 0.3250 \\
Step 3 ($\mathcal{D}_{1..3}$) & 0.3154 & 0.3138 & 0.3129 & 0.3136 & 0.3175 & 0.3183 & 0.3158 & 0.3166 & 0.3221 & 0.3122 & 0.3171 & 0.3171 \\
Step 4 ($\mathcal{D}_{1..4}$) & 0.3116 & 0.3158 & 0.3129 & 0.3132 & 0.3124 & 0.3132 & 0.3140 & 0.3135 & 0.3077 & 0.3127 & 0.3166 & 0.3140 \\
Step 5 ($\mathcal{D}_{1..5}$) & 0.3105 & 0.3060 & 0.3150 & 0.3123 & 0.3160 & 0.3110 & 0.3113 & 0.3122 & 0.3116 & 0.3114 & 0.3131 & 0.3124 \\
Step 6 ($\mathcal{D}_{1..6}$) & 0.3116 & 0.3116 & 0.3116 & 0.3116 & 0.3088 & 0.3088 & 0.3088 & 0.3088 & 0.3096 & 0.3096 & 0.3096 & 0.3096 \\
\midrule
\multicolumn{13}{c}{\textbf{Grandis-CC}} \\
\midrule
Step 1 ($\mathcal{D}_1$) & 0.2481 & 0.2435 & 0.2581 & 0.2532 & 0.2584 & 0.2567 & 0.2702 & 0.2651 & 0.2835 & 0.2723 & 0.2853 & 0.2823 \\
Step 2 ($\mathcal{D}_{1..2}$) & 0.2209 & 0.2259 & 0.2411 & 0.2340 & 0.2322 & 0.2314 & 0.2538 & 0.2450 & 0.2451 & 0.2848 & 0.2632 & 0.2639 \\
Step 3 ($\mathcal{D}_{1..3}$) & 0.2198 & 0.2304 & 0.2278 & 0.2267 & 0.2313 & 0.2264 & 0.2372 & 0.2339 & 0.2302 & 0.2278 & 0.2339 & 0.2319 \\
Step 4 ($\mathcal{D}_{1..4}$) & 0.2096 & 0.2152 & 0.2145 & 0.2136 & 0.2142 & 0.2261 & 0.2192 & 0.2196 & 0.2227 & 0.2267 & 0.2254 & 0.2251 \\
Step 5 ($\mathcal{D}_{1..5}$) & 0.2014 & 0.2008 & 0.2117 & 0.2075 & 0.2098 & 0.2081 & 0.2156 & 0.2129 & 0.2155 & 0.2165 & 0.2186 & 0.2176 \\
Step 6 ($\mathcal{D}_{1..6}$) & 0.2051 & 0.2051 & 0.2051 & 0.2051 & 0.2043 & 0.2043 & 0.2043 & 0.2043 & 0.2101 & 0.2101 & 0.2101 & 0.2101 \\
\midrule
\multicolumn{13}{c}{\textbf{Grandis-RC}} \\
\midrule
Step 1 ($\mathcal{D}_1$) & 0.4734 & 0.4343 & 0.4242 & 0.4361 & 0.5832 & 0.4095 & 0.4106 & 0.4449 & 0.4812 & 0.4484 & 0.4447 & 0.4528 \\
Step 2 ($\mathcal{D}_{1..2}$) & 0.4258 & 0.4350 & 0.3950 & 0.4092 & 0.4250 & 0.3991 & 0.3793 & 0.3924 & 0.4504 & 0.4489 & 0.4260 & 0.4355 \\
Step 3 ($\mathcal{D}_{1..3}$) & 0.4070 & 0.3738 & 0.3737 & 0.3804 & 0.4034 & 0.3742 & 0.3831 & 0.3854 & 0.4501 & 0.4250 & 0.4363 & 0.4368 \\
Step 4 ($\mathcal{D}_{1..4}$) & 0.3709 & 0.3436 & 0.3660 & 0.3625 & 0.3641 & 0.3562 & 0.3601 & 0.3601 & 0.3885 & 0.4432 & 0.3833 & 0.3963 \\
Step 5 ($\mathcal{D}_{1..5}$) & 0.3634 & 0.3512 & 0.3551 & 0.3560 & 0.3515 & 0.3692 & 0.3568 & 0.3582 & 0.4368 & 0.3903 & 0.4206 & 0.4178 \\
Step 6 ($\mathcal{D}_{1..6}$) & 0.3578 & 0.3578 & 0.3578 & 0.3578 & 0.3456 & 0.3456 & 0.3456 & 0.3456 & 0.3497 & 0.3497 & 0.3497 & 0.3497 \\
\bottomrule
\end{tabular}%
}
\end{table}
\begin{figure}[htbp]
  \centering
  \includegraphics[width=\textwidth]{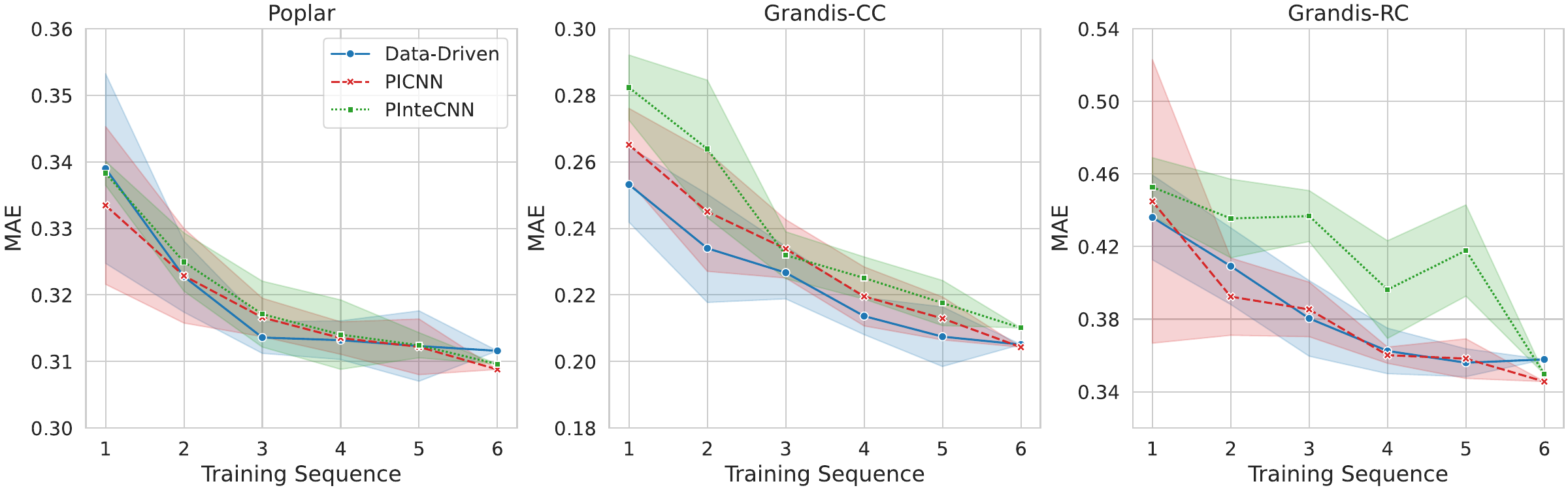} 
  \caption{
    Comparison of the Main Path for Data-Driven, PICNN, and PInteCNN across sequential data integration steps. 
    We evaluate on the Poplar, Grandis-CC, and Grandis-RC datasets using MAE.
    The trend lines represent the average MAE across all integration strategies (Best-First, Worst-First, and Random sequences), illustrating the overall learning trajectory of each model as more data is introduced. 
    The shaded regions denote the corresponding standard deviation, indicating the performance variance introduced by the specific choice of the integration path at each step.
  }
  \label{fig:main_path_comparison}
\end{figure}

\paragraph{Experimental Configurations.}
To empirically evaluate model robustness, we introduce diverse strategies for data integration. 
By fixing the optimal validation and test sets, $\mathcal{D}_{\text{val}}$ and $\mathcal{D}_{\text{test}}$, as the target distribution, we iteratively accumulate the six training sub-partitions into a cumulative training set, $\mathcal{D}_{\text{train}}$, via distinct topological pathways:

\begin{itemize}
    \item \textbf{Best-First Integration.} 
    A process that minimizes the cumulative KS distance relative to the target distribution, testing optimal distribution alignment scenarios.
    \item \textbf{Worst-First Integration.} 
    A process that maximizes the cumulative KS distance at each step, intentionally inducing maximum distributional discrepancy to test the extreme limits of model generalization.
    \item \textbf{Random Integration.}
    A stochastic baseline representing an unguided data accumulation process, executed across multiple random seeds to establish statistical norms.
\end{itemize}

To synthesize these distinct pathways into a singular, definitive metric, we establish the \textbf{Main Path (Expected Performance)}. 
Calculated as the average across the Best-First, Worst-First, and Random strategies (Table~\ref{tab:full_ablation_master} and Figure~\ref{fig:main_path_comparison}), the Main Path isolates the fundamental learning trajectory of each model as a function of data volume, smoothing out the variance introduced by specific sampling strategies.
The evaluation investigates how the models navigate data scarcity and scale by evaluating the interplay between three distinct factors: distribution alignment, intra-species diversity, and physical regularization. 
\begin{itemize}
    \item \textit{Distribution Alignment (Best-First)} tests whether strict statistical matching (minimizing KS distance) to the target set is required to stabilize the data-driven approach when training volumes are critically low.
    \item \textit{Intra-species diversity (Worst-First)} tests whether early exposure to physical extrema and boundary cases (maximizing KS distance) induces catastrophic domain shift.
    \item \textit{Physical Regularization} evaluates how physics-informed mechanisms alter the model's reliance on the previous two factors, directly impacting the variance (shaded regions in Figure~\ref{fig:main_path_comparison}) and expected performance.
\end{itemize}
This multi-axis evaluation isolates the precise conditions under which purely statistical learning fails, validating the necessity of physics-informed constraints when data is simultaneously limited, highly diverse, and structurally complex.

\paragraph{Taxonomy of Physics-Informed Robustness Across Dataset Topologies.}
The sequential ablation experiments analyze the interaction between the integration strategies and the thermal prediction approaches across the three wood datasets, characterizing the robustness and expected performance of the physics-informed models (Table~\ref{tab:full_ablation_master}).
\paragraph{Poplar.} 
On the Poplar dataset, the expected performance of the data-driven approach plateaus prematurely. While it successfully lowers the MAE during the initial integration phases, its learning stagnates by Step 3, ultimately flatlining near an MAE of $0.3116$. In contrast, the physics-informed models impose strict structural regularization that smooths the complex loss landscape. Starting at an average MAE of $0.3335$ in Step 1, PICNN enforces a predictable, monotonic descent, pushing past the statistical plateau to achieve the global minimum of $0.3088$ by Step 6, while PInteCNN remains competitive at $0.3096$. \\
\textbf{Conclusion.} 
When underlying dataset complexity causes the data-driven approach to prematurely plateau, physical constraints serve as a powerful regularizer, ensuring continuous, monotonic convergence and a lower asymptotic error limit.

\paragraph{Grandis-CC.} 
Under extreme data scarcity (Step 1), the data-driven approach performs well, achieving the lowest average MAE ($0.2532$) and outperforming both PICNN ($0.2651$) and PInteCNN ($0.2823$). However, as data volume scales, the data-driven approach experiences diminishing returns, and its rate of improvement slows. In contrast, PICNN maintains a steeper, more sustained descent trajectory. By Step 6, this scaling allows PICNN to marginally cross under the data-driven approach to achieve the lowest final MAE ($0.2043$ vs. $0.2051$). Interestingly, PInteCNN struggles to reconcile its integration mechanisms with this specific topology, lagging behind both at $0.2101$. \\
\textbf{Conclusion.} 
While the data-driven approach can rapidly adapt to limited data distributions, physical regularization (PICNN) sustains a more consistent learning trajectory over time, ultimately overtaking the statistical baseline as the broader dataset distribution is exposed.

\paragraph{Grandis-RC.}
Grandis-RC presents high structural diversity, leading to volatile early-stage behavior. At Step 1, PICNN exhibits substantial variance depending on the sampling strategy (ranging from an expected MAE of $0.5832$ under Best-First to $0.4095$ under Worst-First), allowing the data-driven approach to claim the best initial performance ($0.4361$). Yet, as data accumulates, a distinct inversion occurs. Both PICNN and PInteCNN exhibit robust learning trajectories, whereas the data-driven approach hits a performance plateau, stagnating near $0.3578$ from Step 4 onward. By Step 6, PICNN ($0.3456$) and PInteCNN ($0.3497$) decisively break through the performance ceiling of the data-driven approach.\\
\textbf{Conclusion.} 
Under high variance and complex structural diversity, physics-informed models initially struggle to reconcile physical laws with highly sparse data. However, once sufficient data is integrated, physical embedding provides a distinct advantage, allowing the models to map complex boundaries that statistical learning struggles to resolve effectively.

\end{document}